\def\theTitle{Adversarial Training Can Hurt Generalization}
\icmltitlerunning{\theTitle}
\newcommand{\wrn}[2]{\text{WRN#1-#2}}
\newcommand\sF{\ensuremath{\mathcal{F}}}
\newcommand\sN{\ensuremath{\mathcal{N}}}
\newcommand\sX{\ensuremath{\mathcal{X}}}
\newcommand\sY{\ensuremath{\mathcal{Y}}}
\newcommand\by{\ensuremath{\mathbf{y}}}
\newcommand\half{\ensuremath{\frac{1}{2}}}
\newcommand\R{\ensuremath{\mathbb{R}}} 
\newcommand\refeqn[1]{(\ref{eqn:#1})}
\newcommand\refeqns[2]{(\ref{eqn:#1}) and (\ref{eqn:#2})}
\newcommand\refsec[1]{Section~\ref{sec:#1}}
\newcommand\reftab[1]{Table~\ref{tab:#1}}
\newcommand\refthm[1]{Theorem~\ref{thm:#1}}
\newcommand{\E}{\ensuremath{\mathbb{E}}} 
\newcommand{\fstar}{\ensuremath{f^\star}}
\DeclareMathOperator*{\argmax}{arg\,max}
\DeclareMathOperator*{\argmin}{arg\,min}
\newcommand{\RN}{\mathbb{R}}
\newcommand{\X}{\mathcal{X}}
\newcommand{\std}{\text{std}}
\newcommand{\truef}{f^\star}
\newcommand{\Prob}{\mathbb{P}}
\newcommand{\stest}{\hat{f}^\text{std}_n}
\newcommand{\robest}{\hat{f}^\text{rob}_n}
\newcommand{\cifar}{\textsc{Cifar-10}}
\newcommand{\mnist}{\textsc{Mnist}~}
\newcommand{\xline}{\sX_\text{line}}
\newcommand{\round}[1]{\lfloor{#1}\rceil}
\begin{document}

\twocolumn[
\icmltitle{\theTitle}



\icmlsetsymbol{equal}{*}

\begin{icmlauthorlist}
\icmlauthor{Aditi Raghunathan$^*$}{stanford}
\icmlauthor{Sang Michael Xie$^*$}{stanford}
\icmlauthor{Fanny Yang}{stanford}
\icmlauthor{John C. Duchi}{stanford}
\icmlauthor{Percy Liang}{stanford}
\end{icmlauthorlist}
\icmlaffiliation{stanford}{Stanford University}
\icmlcorrespondingauthor{Aditi Raghunathan}{aditir@cs.stanford.edu}
\icmlcorrespondingauthor{Sang Michael Xie}{xie@cs.stanford.edu}
\vskip 0.3in
]



\printAffiliationsAndNotice{\icmlEqualContribution} 

\begin{abstract}
While adversarial training can improve robust accuracy (against an adversary),
it sometimes hurts standard accuracy (when there is no adversary).
Previous work has studied this tradeoff between standard and robust accuracy,
but only in the setting where no predictor performs well on both objectives in the infinite data limit.
In this paper, we show that even when the optimal predictor with infinite data performs well on both objectives, a tradeoff can still manifest itself with finite data.
Furthermore, since our construction is based on a convex learning problem,
we rule out optimization concerns, thus laying bare a fundamental tension between robustness and generalization. Finally, we show that robust self-training mostly eliminates this tradeoff by leveraging unlabeled data.
\end{abstract}

\section{Introduction}
\label{sec:intro}
Neural networks trained using standard training
have very low accuracies on perturbed inputs commonly referred to as
\emph{adversarial examples}~\citep{szegedy2014intriguing}.
Even though adversarial training
\citep{goodfellow2015explaining,madry2018towards} can be effective at
improving the accuracy on such examples (\emph{robust accuracy}),
these modified training methods decrease accuracy on
natural unperturbed inputs (\emph{standard accuracy})
\citep{madry2018towards, zhang2019theoretically}.
\reftab{main-observation} shows the discrepancy between standard and adversarial training on \cifar. While adversarial training improves robust accuracy from 3.5\% to 45.8\%, standard accuracy drops from 95.2\% to 87.3\%.

One explanation for a tradeoff is that the standard and robust objectives
are fundamentally at conflict. Along these lines,~\citet{tsipras2019robustness} and \citet{zhang2019theoretically}
construct learning problems where the perturbations can change the output of the Bayes estimator. Thus no predictor can achieve both optimal standard accuracy and robust
accuracy even in the \emph{infinite data limit}.
However, we typically consider perturbations (such as imperceptible $\ell_\infty$ perturbations) which do not change the output of the Bayes estimator, so that a predictor with both optimal standard and high robust accuracy exists.

Another explanation could be that the hypothesis class is not rich enough to contain predictors that have optimal standard and high robust accuracy, even if they exist~\citep{nakkiran2019adversarial}. However, \reftab{main-observation} shows that adversarial training achieves 100\% standard and robust accuracy on the training set, suggesting that the hypothesis class is expressive enough in practice.

Having ruled out a conflict in the objectives and expressivity issues,~\reftab{main-observation} suggests that the tradeoff stems from the worse generalization of adversarial training either due to (i) the statistical properties of the robust objective or (ii) the dynamics of optimizing the robust objective on neural networks.
In an attempt to disentangle optimization and statistics, we ask \emph{does the tradeoff indeed disappear if we rule out optimization issues?}
After all, from a statistical perspective, the robust objective adds information (constraints on the outputs of perturbations) which should intuitively aid generalization, similar to Lasso regression which enforces sparsity~\citep{tibshirani1996regression}.

\begin{table}[t]
  \centering
  \begin{tabular}{c|cc}
     & \textbf{\shortstack{Standard \\training}} & \textbf{\shortstack{Adversarial \\ training}} \\ \hline
    Robust test & $3.5\%$ & $45.8\%$ \\
    Robust train & - &  $100\%$ \\
    \cellcolor{black!15} Standard test & \cellcolor{black!15} $95.2\%$ & \cellcolor{black!15} $87.3\%$ \\
    Standard train & $100\%$ &  $100\%$ \\
  \end{tabular}
  \caption{Train and test accuracies standard and adversarially-trained models on \cifar.
    Both have 100\%
    training accuracy but very different test accuracies. In particular, adversarial training causes worse standard generalization.}
  \label{tab:main-observation}
\end{table}

\paragraph{Contributions.}
We answer the above question negatively by constructing a learning problem with a \emph{convex loss} where adversarial training hurts generalization even when the optimal predictor has both optimal standard and robust accuracy.
Convexity rules out optimization issues, revealing a fundamental statistical explanation for why adversarial training requires more samples to obtain high standard accuracy. Furthermore, we show that we can eliminate the tradeoff in our constructed problem using the recently-proposed robust self-training~\citep{uesato2019are, carmon2019unlabeled, najafi2019robustness, zhai2019adversarially} on additional unlabeled data.

In an attempt to understand how predictive this example is of practice, we subsample \cifar~ and visualize trends in the performance of standard and adversarially trained models with varying training sample sizes.
We observe that the gap between the accuracies of standard and
adversarial training decreases with larger sample size, mirroring the
trends observed in our constructed problem.
Recent results from~\citep{carmon2019unlabeled} show that, similarly to our constructed setting, robust self-training also helps to mitigate the trade-off in~\cifar.

\paragraph{Standard vs. robust generalization.}
Recent work~\citep{schmidt2018adversarially, yin2018rademacher,
khim2018adversarial, montasser2019vc} has focused on the sample complexity of
learning a predictor that has high robust accuracy (robust generalization), a \emph{different objective}.
In contrast, we study the finite sample behavior of adversarially trained
predictors on the standard learning objective (standard
generalization), and show that adversarial training as a particular training procedure could require more samples
to attain high standard accuracy.

\section{Convex learning problem: the staircase}
\label{sec:convex}
\begin{figure*}[t]
  \centering
  \subfigure[Slope $m=1$]{
    \includegraphics[scale=0.3]{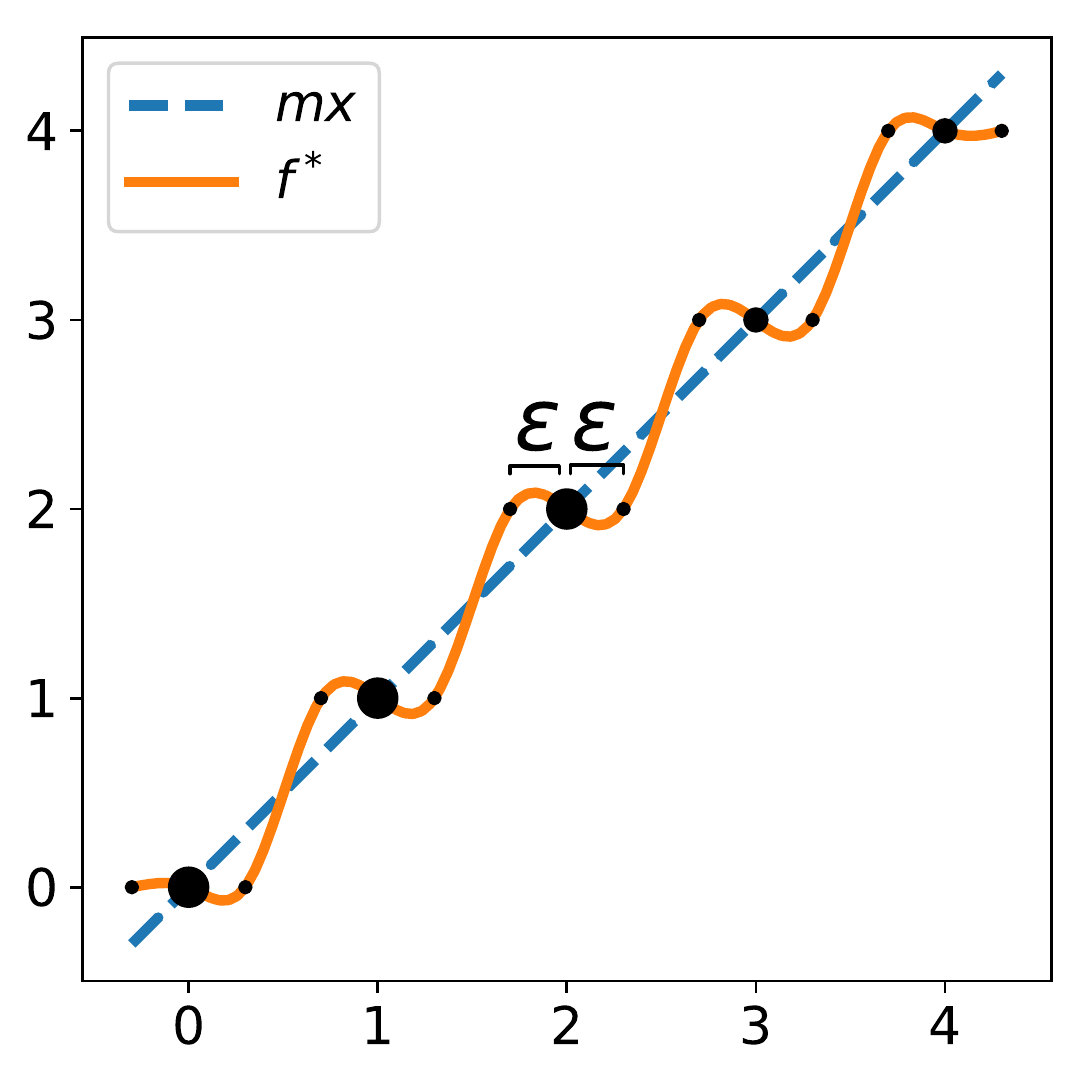}
    \label{fig:schematica}
  }
    \subfigure[Small sample size]{
    \includegraphics[scale=0.3]{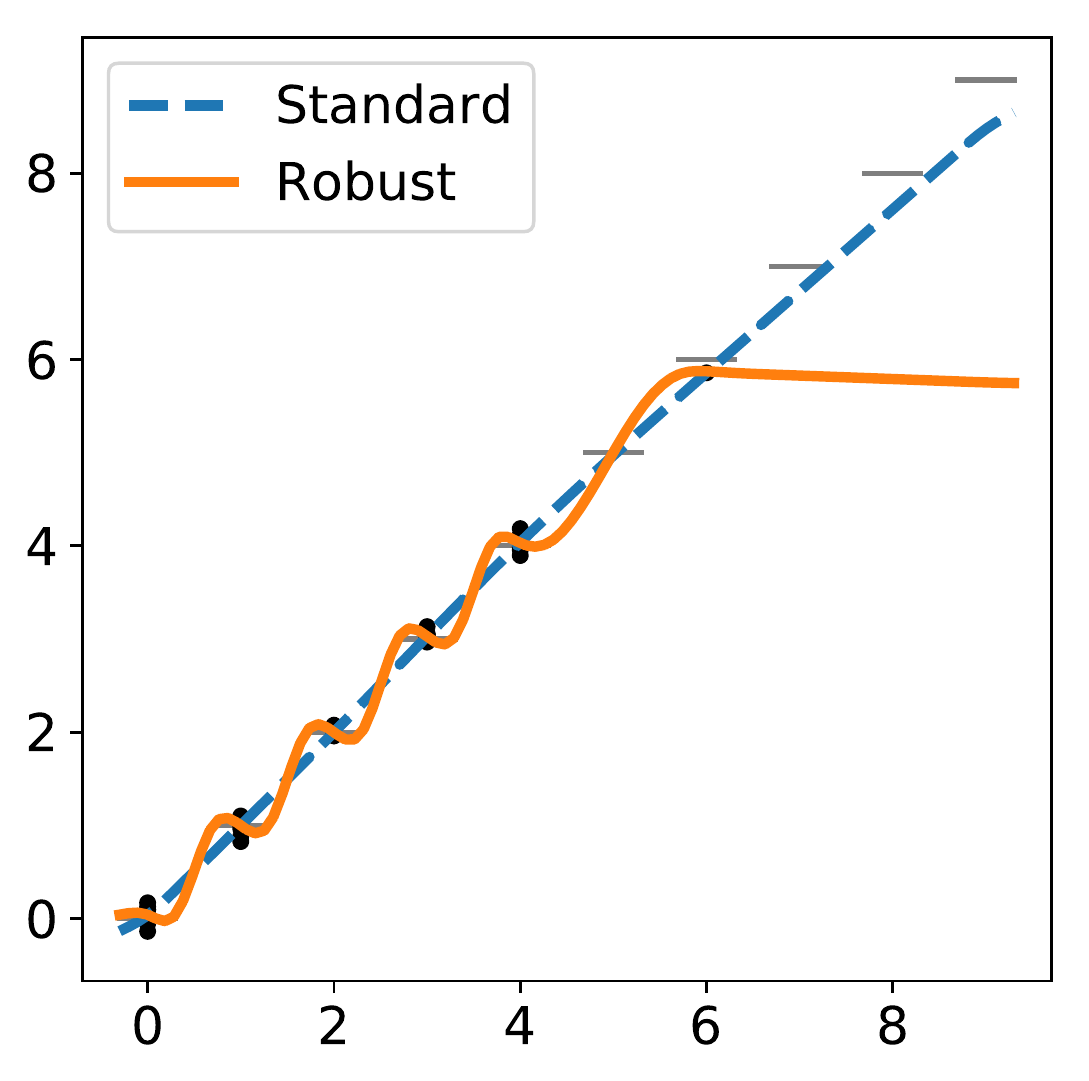}
    \label{fig:tradeoff-small-sample}
  }
    \subfigure[Large sample size]{
      \includegraphics[scale=0.3]{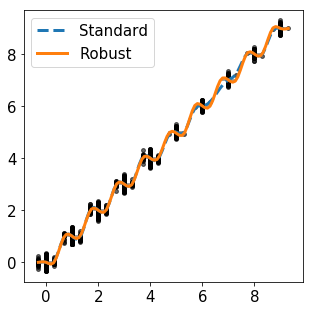}
      \label{fig:tradeoff-large-sample}
  }
  \subfigure[Slope $m=0$]{
    \includegraphics[scale=0.3]{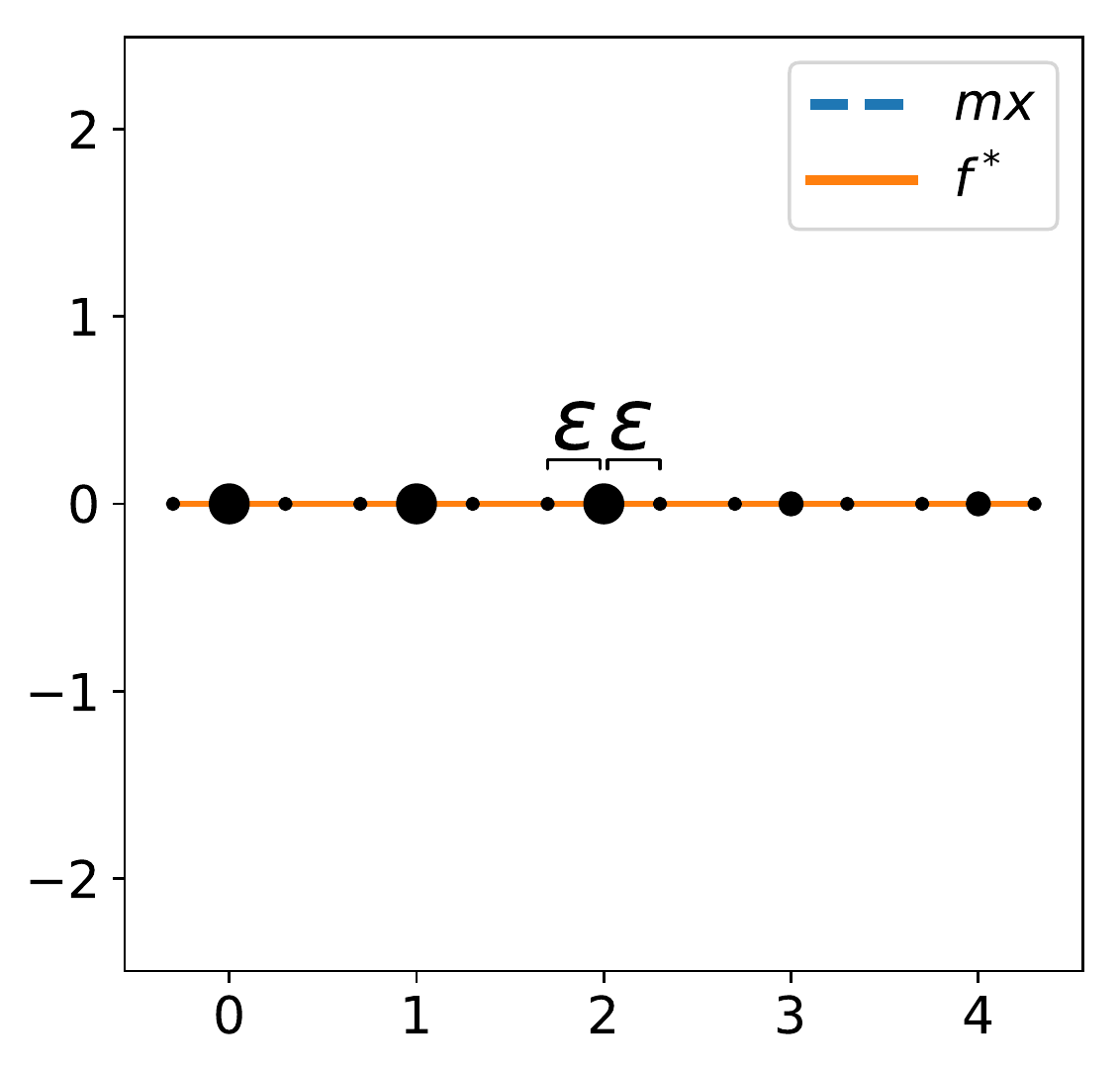}
    \label{fig:schematicb}
  }

    \caption{\textbf{(a):} An illustration of our convex problem with slope $m=1$, with size of the circles proportional to probability under the data distribution. The dashed blue line shows a simple linear predictor that has low test error but not robust to perturbations to nearby low-probability points, while the solid orange line shows the complex optimal predictor $f^\star$ that is both robust and accurate. \textbf{(b):} With small sample size ($n=40$), any robust predictor that fits the sets $B(x)$ is forced to be a staircase that generalizes poorly. \textbf{(c): } With large sample size ($n=25000$), the training set contains all the points from $\xline$ and the robust predictor is close to $\fstar$ by enforcing the right invariances. The standard predictor also has low error, but higher than the robust predictor. 
    \textbf{(d):} An illustration of our convex problem when the slope $m=0$. The optimal predictor $\fstar$ that is robust is a simple linear function. This setting sees no tradeoff for any sample size. }
    \label{fig:splines}
\end{figure*}

We construct a learning problem with the following properties.
First, fitting the majority of the distribution is statistically easy---it
can be done with a \emph{simple} predictor. Second, perturbations of these majority
points are low in probability and require \emph{complex} predictors to be fit.
These two ingredients cause standard estimators to perform better than their adversarially
trained robust counterparts with a few samples. Standard training only
fits the training points, which can be done with a simple estimator that generalizes well;
adversarial training encourages fitting perturbations of the training points
making the estimator complex and generalize poorly.

\subsection{General setup}
We consider mapping $x \in \sX \subset \R$ to $y \in \R$
where $(x,y)$ is a sample from the joint distribution $\Prob$ and conditional densities exist.
We denote by $\Prob_x$ the marginal distribution on $\sX$.
We generate the data as $y = \fstar(x) + \sigma v_i$ where $v_i \overset{\text{i.i.d.}}{\sim} \sN(0, 1)$
and $\fstar:\sX\rightarrow \R$. For an example $(x, y)$, we measure robustness of a predictor with respect to an
\emph{invariance set} $B(x)$ that contains the set of inputs on which the predictor is expected to match the target $y$.

The central premise of this work is that the optimal predictor is robust. In our construction, we let $\fstar$ be robust by enforcing the invariance property (see Appendix~\ref{app-robust-accurate})
\begin{align}
  \label{eqn:invariance}
  f(x) &= f(\tilde{x}), \quad\forall \tilde{x} \in B(x).
\end{align}

Given training data consisting of $n$ i.i.d. samples $(x_i,
y_i) \sim \Prob$, our goal is to learn a predictor $f \in \sF$.
We assume that the hypothesis class $\sF$ contains $\fstar$ and consider the squared loss.
Standard training simply minimizes the empirical risk over the training points.
Robust training seeks to enforce invariance to perturbations of training points by penalizing the worst-case loss over the invariance set $B(x_i)$ with respect to target $y_i$. We consider regularized estimation and obtain the following standard and robust (adversarially trained) estimators for sample size $n$:
\begin{align}
  \label{eqn:stest}
\stest &\in \argmin_{f \in \sF} \sum \limits_{i=1}^n (f(x_i) - y_i)^2 + \lambda \|f\|^2, \\
  \label{eqn:robest}
  \robest &\in \argmin_{f \in \sF} \sum \limits_{i=1}^n \max \limits_{\tilde{x}_i \in B(x_i)} (f(\tilde{x}_i) - y_i)^2  + \lambda \| f \|^2.
\end{align}

We construct a $\Prob$ and $\fstar$ such that both estimators above converge to $\fstar$, but such that the error of the robust estimator $\robest$ is larger than that of $\stest$ for small sample size $n$.
\vspace{2pt}
\subsection{Construction}
\vspace{2pt}
In our construction, we consider linear predictors as ``simple'' predictors that generalize well and \emph{staircase} predictors as ``complex'' predictors that generalize poorly (Figure~\ref{fig:schematica}).
\paragraph{Input distribution.} In order to satisfy the property that a simple predictor fits most of the distribution, we define $\fstar$ to be linear on the set $\xline \subseteq \sX$, where
\begin{align}
  \label{eqn:input}
  \xline &= \{0, 1, 2, \hdots, s-1 \} \nonumber, \\
  \Prob_x(\xline) &= 1 - \delta,
\end{align}
for parameters $\delta\in [0,1]$ and a positive integer $s$.
Any predictor that fits points in $\xline$ will have low (but not optimal) standard error
when $\delta$ is small.

\paragraph{Perturbations.} We now define the perturbations such that that fitting perturbations of the majority of the distribution requires complex predictors.
We can obtain a staircase by flattening out the region around the points in $\xline$ locally (Figure~\ref{fig:schematica}). This motivates our construction where we treat points in $\xline$ as anchor points and the set $\xline^c$ as local perturbations of these points: $x \pm \epsilon$ for $x \in \xline$. This is a simpler version of the commonly studied $\ell_\infty$ perturbations in computer vision. For a point that is not an anchor point, we define $B(x)$ as the invariance set of the closest anchor point $\round{x}$. Formally, for some $\epsilon \in (0, \half)$,
\begin{align}
  \label{eqn:rob-set}
  B(x) &= \{ \round{x}, \round{x} + \epsilon, \round{x} - \epsilon \}.
\end{align}

\paragraph{Output distribution.}
For any point in the support $\sX$,
\begin{align}
  \label{eqn:output}
  \fstar(x) &= m \round{x}, ~\forall x\in\sX,
\end{align}
for some parameter $m$. Setting the slope as $m=1$ makes $\fstar$ resemble a staircase.
Such an $\fstar$ satisfies the invariance property~\refeqn{invariance} that ensures that the optimal predictor for standard error is also robust.
Note that $\fstar(x)=mx$ (a simple linear function) when restricted to $x$ in $\xline$.
Note also that the invariance sets $B(x)$ are disjoint. This is in contrast to the example in~\citep{zhang2019theoretically}, where any invariant function is also globally constant. Our construction allows a non-trivial robust and accurate estimator.

We generate the output by adding Gaussian noise to the optimal predictor $\fstar$, i.e.,
$y = \fstar(x) + \sigma v_i$ where $v_i \overset{\text{i.i.d.}}{\sim} \sN(0, 1)$.

\subsection{Simulations}
\label{sec:simulations}
\begin{figure}[t]
    \centering
    \subfigure[Staircase, $m=1$]{
        \includegraphics[scale=0.33]{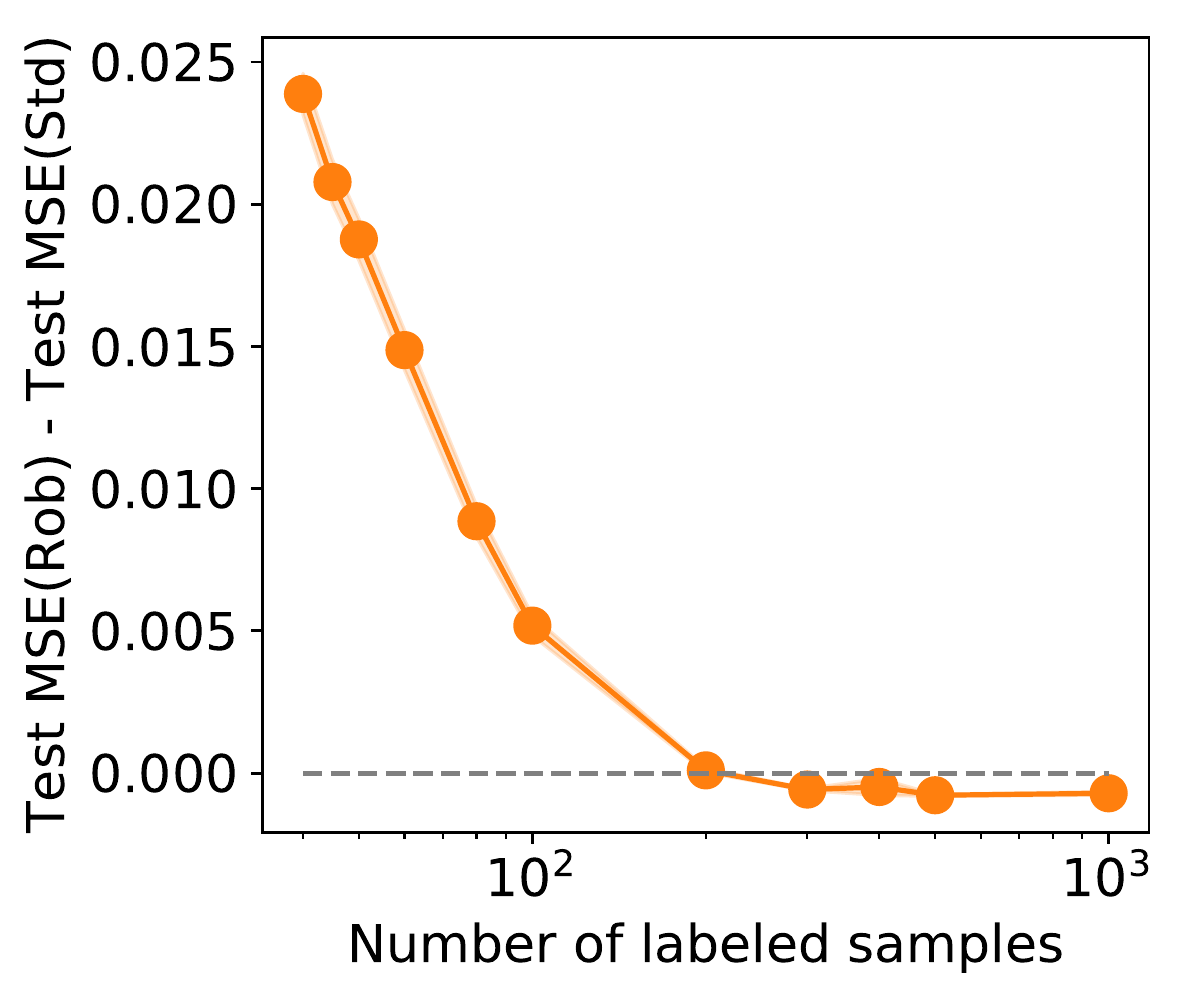}
    }
    \subfigure[$\wrn{40}{2}$ on \cifar]{
      \includegraphics[scale=0.275]{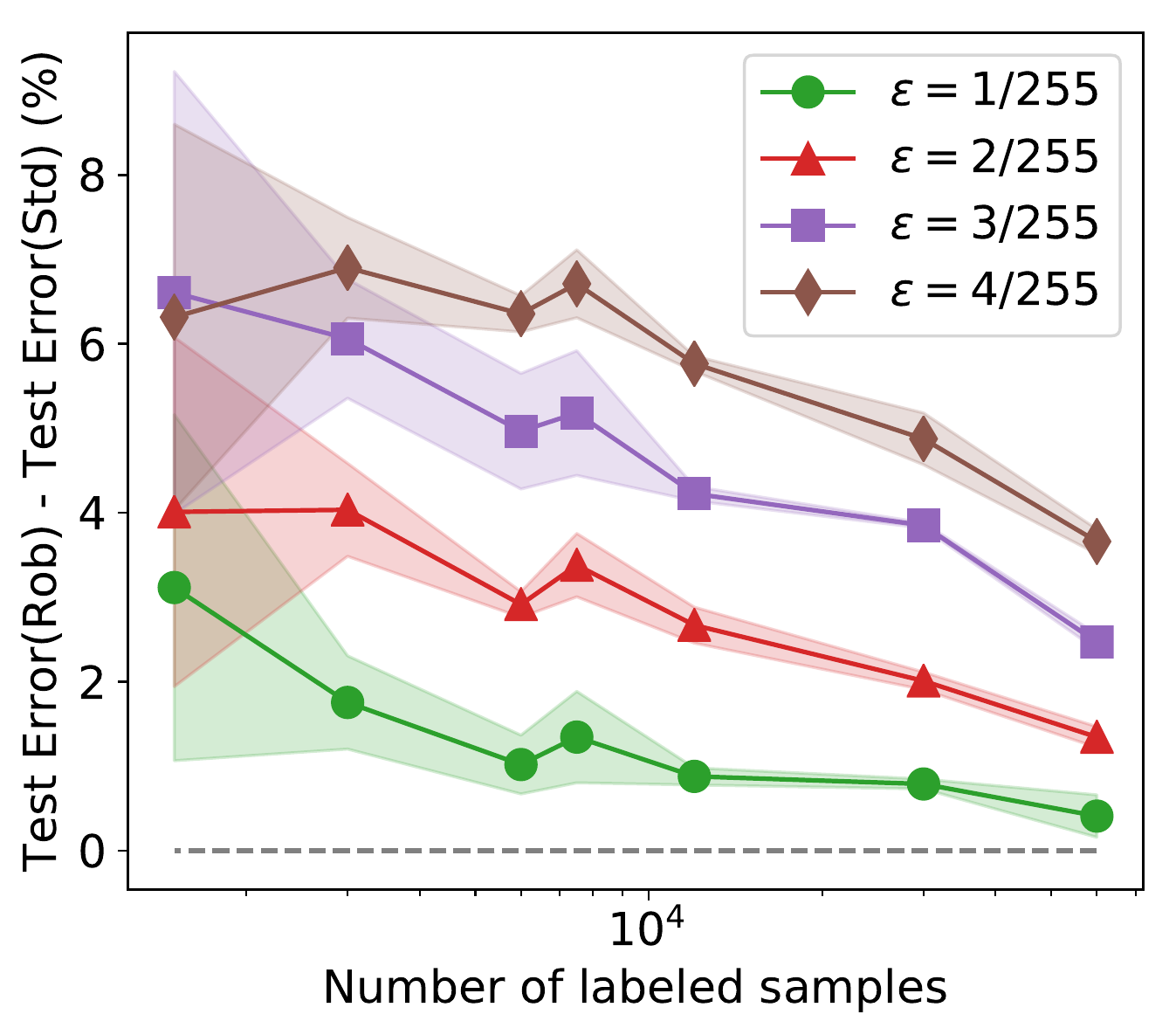}
    }
    \subfigure[Staircase ($m=1$): RST vs. Robust]{
      \includegraphics[scale=0.4]{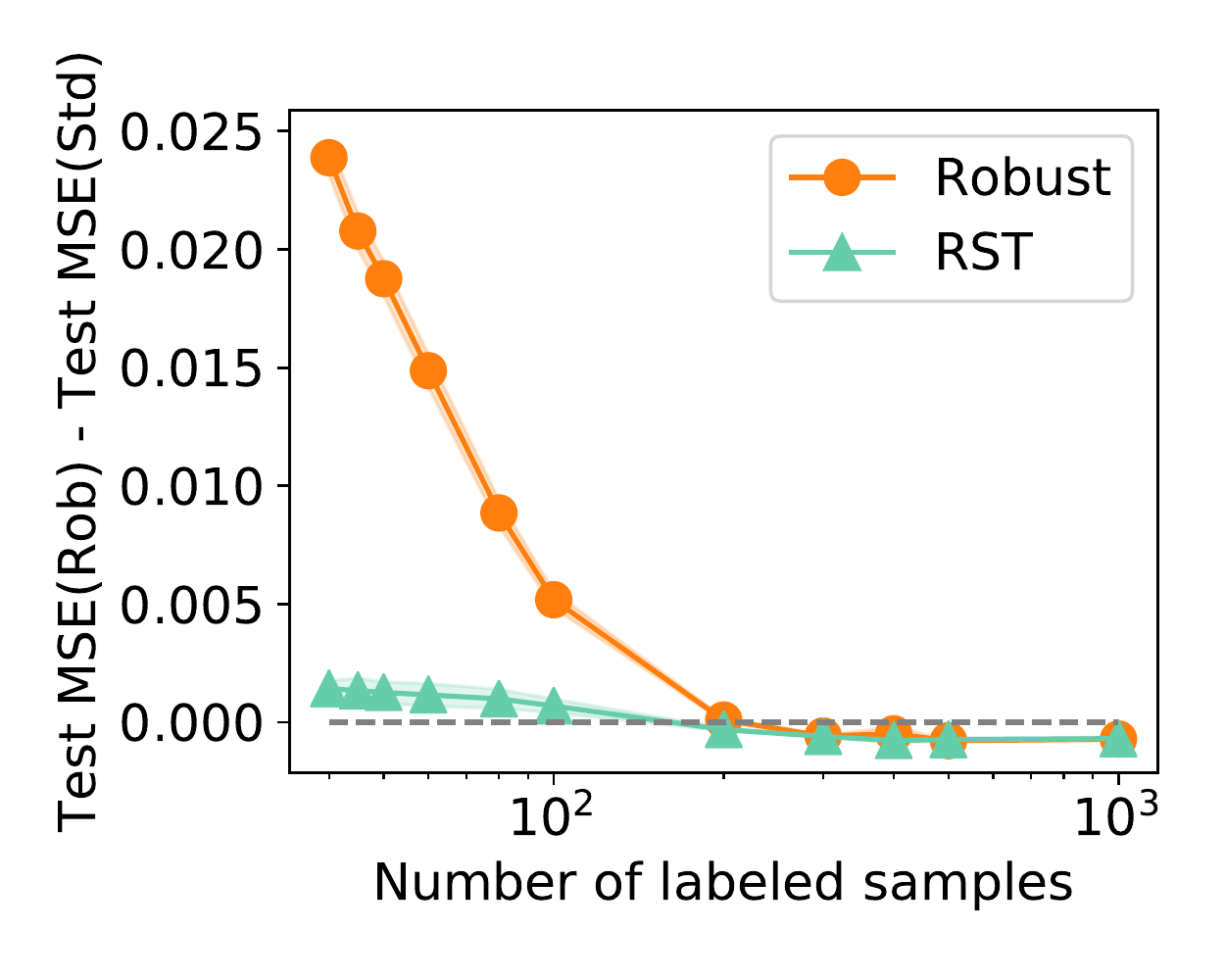}
    }
    \caption{
      Difference between test errors (robust - standard) as a function of the \# of
      training samples $n$. For each $n$, we choose the best regularization parameter $\lambda$ for each of robust and standard training and plot the difference.
      Positive numbers show that the robust estimator has higher MSE than the standard estimator. 
      \textbf{(a)} For the staircase problem with slope $m=1$, we see that for
      small $n$, test loss of the robust estimator is larger.
      As $n$ increases, the gap closes, and eventually the robust estimator has smaller MSE.
      \textbf{(b)} On subsampling \cifar, we see that the gap between test errors (\%) of
      standard and adversarially trained models decreases as the number of
      samples increases, just like the staircase construction in (a). Extrapolating,
      the gap should close as we have more samples.
      \textbf{(c)} Robust self-training (RST), using $1000$ additional unlabeled points, achieves comparable test MSE to standard training (with the same amount of labeled data) and mostly eliminates the tradeoff seen in robust training. The shaded regions represent 1 STD.
    }
    \label{fig:tradeoff}
\end{figure}

We empirically validate the intuition that the staircase problem is sensitive to robust training by simulating training with various sample sizes and comparing the test MSE of the standard and robust estimators~\refeqns{stest}{robest}.
We report final test errors here; trends in generalization gap (difference between train and test error) are nearly identical.
See Appendix~\ref{sec:app-training-details} for more details.

Figure~\ref{fig:tradeoff} shows the difference in test errors of the two estimators. For each sample size $n$, we compare the standard and robust estimators by performing a grid search over regularization parameters $\lambda$ that individually minimize the test MSE of each estimator. With few samples, most training samples are from $\xline$ and standard training learns a simple linear predictor that fits all of $\xline$. On the other hand, robust estimators fit the low probability perturbations $\xline^c$, leading to staircases that generalize poorly. Figure~\ref{fig:tradeoff-small-sample} visualizes the two estimators for small samples. However, as we increase the size of the training set, the training set contains all points from $\xline$, and robust estimators also generalize well despite being more complex. Furthermore, in this regime, robust estimators indeed see the expected ``regularization'' benefit where the robust objective helps fit points in the low probability regions $\xline^c$, even when they are not yet sampled in the training points.
In general, we see that robust training has higher test error with a small sample size, but the difference in the test error of standard and robust estimators decreases as sample size increases, and robust training eventually obtains lower test error.

Another common approach to encoding invariances is data augmentation, where perturbations are \emph{sampled} from $B(x)$ and added to the dataset.
Data augmentation is less demanding than adversarial training which minimizes loss on the \emph{worst-case} point within the invariance set.
We find that for our staircase example, an estimator trained even with the less demanding data augmentation sees a similar tradeoff with small training sets, due to increased complexity of the augmented estimator.

\subsection{Robust self-training mostly eliminates the tradeoff}
\label{sec:rst}

Section~\ref{sec:simulations} shows that the gap between the standard errors of robust and standard estimators decreases as training sample size increases.
Moreover, if we obtained training points spanning $\xline$, then the robust estimator (staircase) would also generalize well and have lower error than the standard estimator.
Thus, a natural strategy to eliminate the tradeoff is to sample more training points.
In fact, we do not need additional labels for the points on $\xline$---a standard trained estimator fits points on $\xline$ with just a few labels, and can be used to generate labels on additional unlabeled points. Recent works have proposed robust self-training (RST) to leverage unlabeled data for robustness~\cite{rosenberg2005semi,carmon2019unlabeled, uesato2019are, najafi2019robustness, zhai2019adversarially}.
RST is a robust variant of the popular self-training algorithm for semi-supervised learning~\citep{rosenberg2005semi}, which uses a standard estimator trained on a few labels to generate psuedo-labels for unlabeled data as described above. See Appendix~\ref{sec:app-rst} for details on RST.

For the staircase problem ($m=1$), RST mostly eliminates the tradeoff and achieves similar test error to standard training (while also being robust, see Appendix~\ref{sec:app-rst-robust}) as shown in Figure~\ref{fig:tradeoff}.

\section{Experiments on \cifar}
In our staircase problem from \refsec{convex}, robust estimators perform worse
on the standard objective because these predictors are more complex, thereby generalizing poorly.
Does this also explain the drop in standard accuracy we see for adversarially trained models on real datasets like \cifar?

We subsample \cifar~ by various amounts to study the effect of sample size on the standard test errors of standard and robust models. To train a robust model, we use the adversarial training procedure from~\citep{madry2018towards} against $\ell_\infty$ perturbations of varying sizes (see Figure~\ref{fig:tradeoff}). The gap in the errors of the standard and adversarially trained models decreases as sample size increases, mirroring the trends in the staircase problem.
Extrapolating the trends, more training data should eliminate the tradeoff in \cifar.
Similarly to the staircase example, \citep{carmon2019unlabeled} showed that robust self-training with additional unlabeled data improves robust accuracy and standard accuracy in \cifar. See Appendix~\ref{sec:app-rst} for more details.

\section{Adversarial training can also help}
\label{sec:robust_helps}

\begin{figure}[t]
    \centering
    \subfigure[Staircase, $m=0$]{
      \includegraphics[scale=0.35]{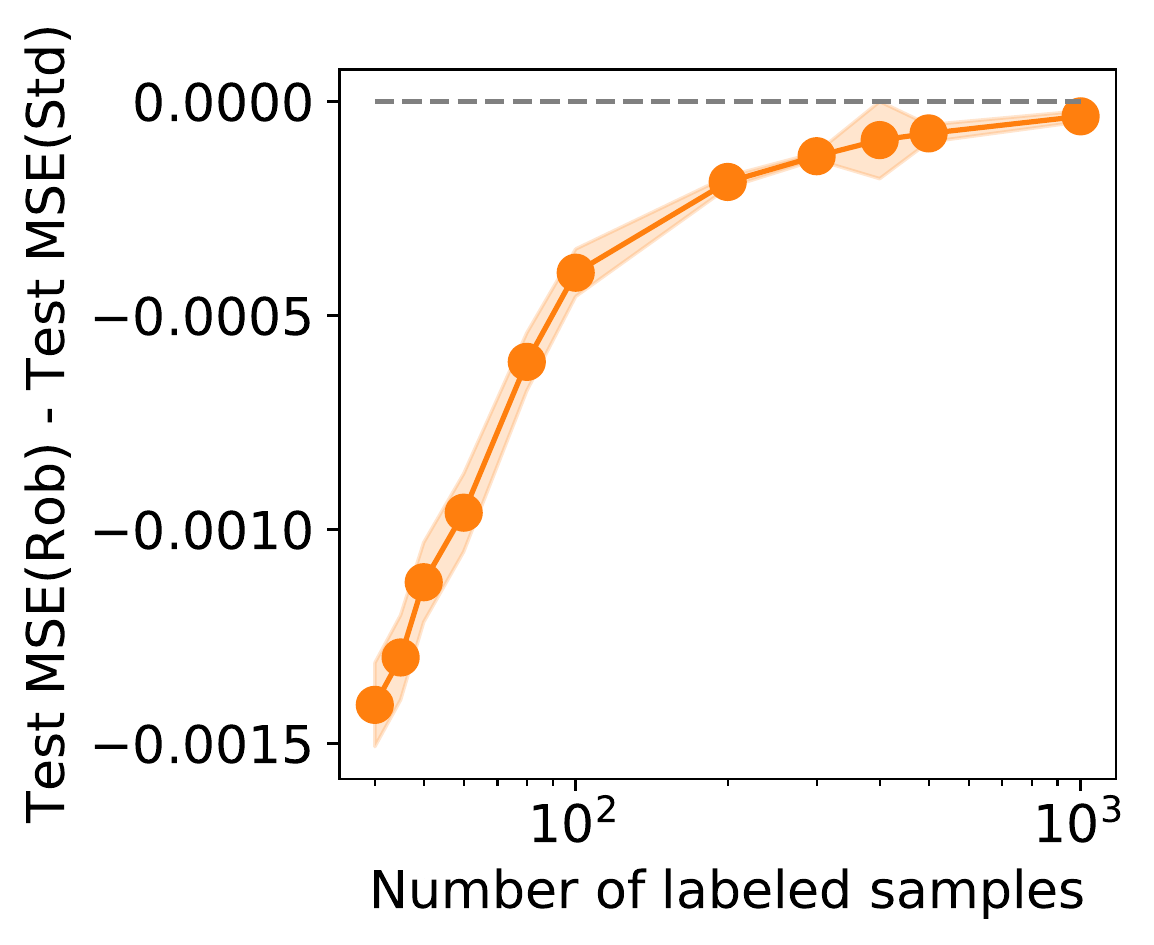}
  }
    \subfigure[Small CNN on \mnist]{
      \includegraphics[scale=0.34]{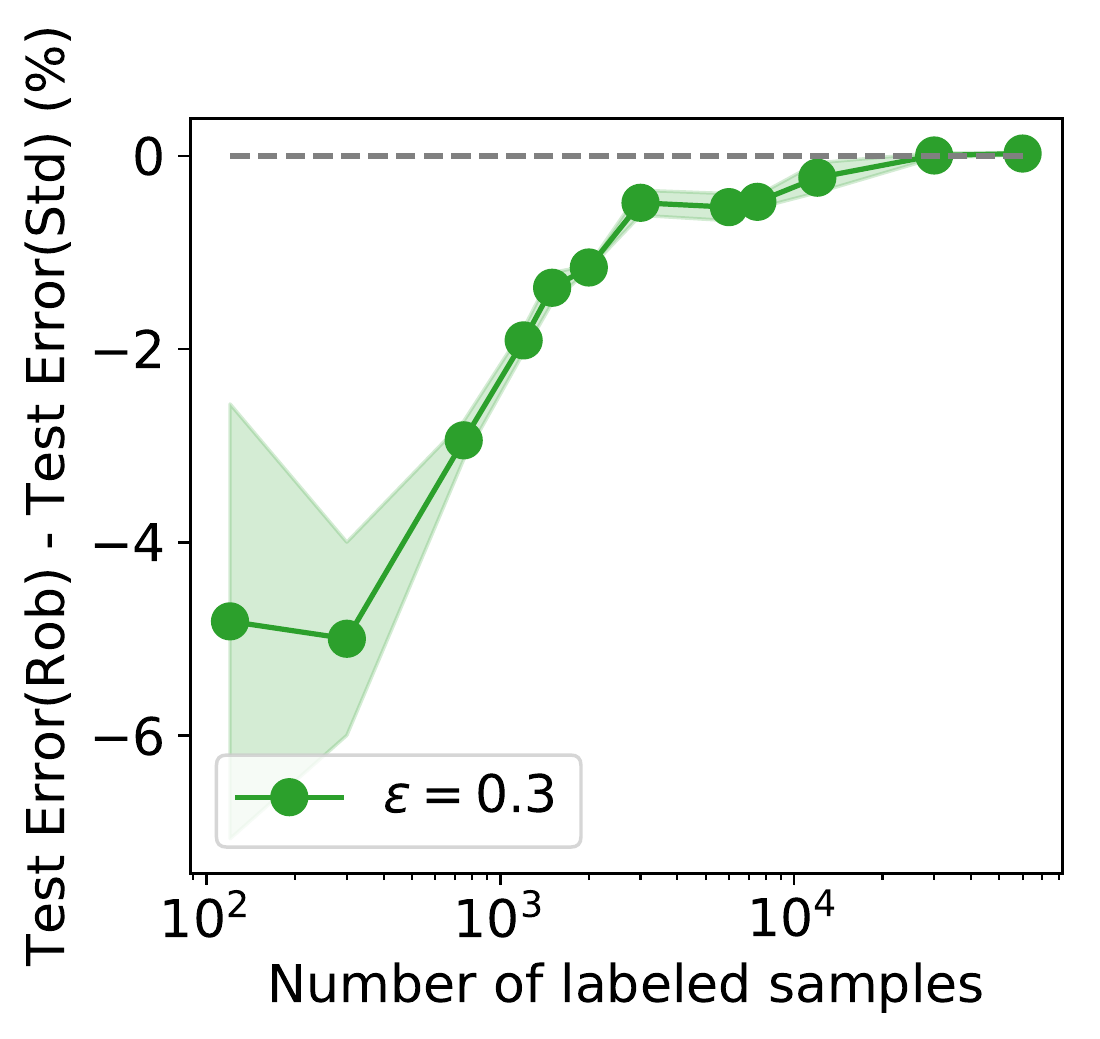}
    }
    \caption{
      Difference between test errors (robust - standard) as a function of the \# of
      training samples $n$. For each $n$, we choose the best regularization parameter $\lambda$ for each of robust and standard training and take the difference.
      Negative numbers mean that robust training has a lower test MSE than standard training.
      \textbf{(a)} In the staircase problem with slope $m=0$, the robust estimator consistently outperforms the standard estimator, showing a regularization benefit.
    \textbf{(b)} On \mnist, the adversarially trained model has lower test error (\%) than the standard model. The difference in test errors is largest for small sample sizes and closes with more training samples. Shaded regions represent 1 STD.}
    \label{fig:tradeon}
\end{figure}
One of the key ingredients that causes the tradeoff in the staircase problem is the complexity of robust predictors. If we change our construction such that robust predictors are also simple, we see that adversarial training instead offers a regularization benefit.
When $m=0$, the optimal predictor (which is robust) is linear (Figure~\ref{fig:schematicb}).
We find that adversarial training has lower standard error by enforcing invariance on $B(x)$ making the robust estimator less sensitive to target noise (Figure~\ref{fig:tradeon-small-sample}).

Similarly, on \mnist, the adversarially trained model has lower test error than standard trained model. As we increase the sample size, both standard and adversarially trained models converge to obtain same small test error. We remark that our observation on \mnist is contrary to that reported in \citep{tsipras2019robustness}, due to a different initialization that led to better optimization (see Appendix Section~\ref{sec:app-init-mnist}). 

\section{Conclusion}
\label{sec:conclusion}
In this work, we shed some light on the counter-intuitive phenomenon where
enforcing invariance respected by the optimal function could actually degrade performance.
Being invariant could require complex predictors and consequently more samples to generalize well. Our experiments support that the tradeoff between robustness and accuracy observed in practice is indeed due to insufficient samples and additional unlabeled data is sufficient to mitigate this tradeoff.

\newpage
\section*{Acknowledgements}
We are grateful to Tengyu Ma for several helpful discussions. This work was funded by an Open Philanthropy Project Award and NSF Frontier Award Grant no. 1805310. AR was supported by Google Fellowship and Open Philanthropy AI Fellowship. FY was supported by the Institute for Theoretical Studies ETH Zurich and the Dr. Max R\"ossler and the Walter Haefner Foundation. FY and JCD were supported by the Office of Naval Research Young Investigator Award N00014-19-1-2288.

\bibliography{refdb/all}
\bibliographystyle{abbrvnat}
\newpage
\appendix
\section{Consistency of robust and standard estimators}
\label{app-robust-accurate}
We show that the invariance condition (restated, \refeqn{regression-assumption}) is a sufficient condition for the minimizers of the standard and robust objectives under $\Prob$ in the infinite data limit to be the same.
\begin{align}
  \label{eqn:regression-assumption}
  \fstar(x) = \fstar(\tilde{x})~~\forall \tilde{x} \in B(x), 
\end{align}
for all $x \in \sX$.

Recall that $y = \fstar(x) + \sigma v_i$ where $v_i \overset{\text{i.i.d.}}{\sim} \sN(0, 1)$,
with $\fstar(x) = \E[y \mid x]$. 
Therefore, if $f^\star$ is in the hypothesis class $\sF$, then $\fstar$ minimizes the standard objective for the square loss. 

If both $\stest$~\refeqn{stest} and $\robest$~\refeqn{robest} converge to the same Bayes optimal $\truef$ as $n\rightarrow \infty$, we say that the two estimators $\stest$ and $\robest$ are \emph{consistent}. In this section, we show that the invariance condition~\refeqn{regression-assumption} implies consistency of $\robest$ and $\stest$. 

Intuitively, from \refeqn{regression-assumption}, since $\truef$ is invariant for all $x$ in $B(x)$, the maximum over $B(x)$ in the robust objective is achieved by the unperturbed input $x$ (and also achieved by any other element of $B(x)$). Hence the standard and robust loss of $\fstar$ are equal. For any other predictor, the robust loss upper bounds the standard loss, which in turn is an upper bound on the standard loss of $\truef$ (since $\truef$ is Bayes optimal). Therefore $\truef$ also obtains optimal robust loss and $\stest$ and $\robest$ are consistent and converge to $\truef$ with infinite data.

Formally, let $\ell$ be the square loss function, and the population loss be $\E_{(x,y)\sim\Prob}[\ell(f(x),y)]$.
In this section, all expectations are taken over the joint distribution $\Prob$.
\begin{theorem}
    \label{thm:regression}
    (Regression) Consider the minimizer of the standard population squared loss,
    $f^*=\argmin_f \E[\ell(f(x), y)]$ where $\ell(f(x), y) = (f(x)-y)^2$.
    Assuming \refeqn{regression-assumption} holds, we have that for any $f$, $\E[\max_{\tilde{x}\in B(x)}\ell(f(x),y)] \geq \E[\max_{\tilde{x}\in B(x)}\ell(f^*(x),y)]$, such that $f^*$ is also optimal for the robust population squared loss.
\end{theorem}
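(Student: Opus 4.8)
Write $R(f) = \E[(f(x)-y)^2]$ for the standard population squared loss and $R_{\mathrm{rob}}(f) = \E[\max_{\tilde x \in B(x)}(f(\tilde x)-y)^2]$ for the robust one (the inner maximum is a measurable function of $x$ since $B(x)$ is finite, e.g.\ the three-point set of \refeqn{rob-set}). The plan is to establish the chain
$R_{\mathrm{rob}}(f) \ge R(f) \ge R(f^*) = R_{\mathrm{rob}}(f^*)$ for every $f$, which is exactly the claimed inequality and shows $f^*$ minimizes the robust loss.

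First I would record that $x \in B(x)$ for every $x$ in the support: this is immediate from \refeqn{rob-set}, since $\round{x}=x$ when $x$ is an anchor point and otherwise $x$ is one of $\round{x}\pm\epsilon$. Consequently, for any predictor $f$ and any $(x,y)$ we have $\max_{\tilde x \in B(x)}(f(\tilde x)-y)^2 \ge (f(x)-y)^2$ pointwise, and taking expectations over $\Prob$ gives the first inequality $R_{\mathrm{rob}}(f) \ge R(f)$.

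Next, the invariance hypothesis \refeqn{regression-assumption} states $\fstar(\tilde x) = \fstar(x)$ for all $\tilde x \in B(x)$, so every term appearing in $\max_{\tilde x \in B(x)}(\fstar(\tilde x)-y)^2$ equals $(\fstar(x)-y)^2$; hence the maximum equals $(\fstar(x)-y)^2$ pointwise and $R_{\mathrm{rob}}(f^*) = R(f^*)$. Finally, since $\fstar(x) = \E[y \mid x]$ is the conditional mean, it is the minimizer of the standard squared loss over all (measurable) predictors, so $R(f) \ge R(f^*)$; combined with the previous two facts this yields $R_{\mathrm{rob}}(f) \ge R(f) \ge R(f^*) = R_{\mathrm{rob}}(f^*)$.

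I do not expect a genuine obstacle here — the argument is three short steps. The only points needing care are (i) the reflexivity $x \in B(x)$, which is what makes the robust loss dominate the standard loss pointwise rather than merely at one representative of each invariance class, and (ii) the identification of the argmin defining $f^*$ with the conditional mean $\E[y\mid x]$ (they agree up to a $\Prob_x$-null set, which suffices). Measurability issues are vacuous because each $B(x)$ is finite.
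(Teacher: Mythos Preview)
Your proposal is correct and follows essentially the same three-step chain as the paper's proof: $R_{\mathrm{rob}}(f)\ge R(f)\ge R(f^*)=R_{\mathrm{rob}}(f^*)$, obtained from (i) $x\in B(x)$, (ii) Bayes optimality of the conditional mean, and (iii) invariance of $f^*$ on $B(x)$. You are slightly more explicit than the paper in verifying $x\in B(x)$ and noting measurability, but the argument is the same.
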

\begin{proof}
Note that the optimal standard model is the Bayes estimator, such that $f^*(x) = \E[y \mid x]$.
Then by condition \refeqn{regression-assumption}, $f^*(\tilde{x}) = \E[y\mid \tilde{x}] = \E[y\mid x] = f^*(x)$ for all $\tilde{x}\in B(x)$.
Thus the robust objective for $f^*$ is
\begin{align*}
    \E[\max_{\tilde{x}\in B(x)} \ell(f^*(x), y)] &=  \E\left[\max_{\tilde{x}\in B(x)} (\E[y\mid \tilde{x}] - y)^2\right]\\
    &=  \E\left[(\E[y\mid x] - y)^2\right]\\
    &= \E[\ell(f^*(x),y)]
\end{align*}
where the first equality follows because $f^*$ is the Bayes estimator and the second equality is from \refeqn{regression-assumption}.
Noting that for any classifier $f$, $\E[\max_{\tilde{x}\in B(x)} \ell(f(x), y)] \geq \E[\ell(f(x), y)] \geq \E[\ell(f^\star(x), y)]$, the theorem statement follows.
\end{proof}

For the classification case, consistency requires label invariance,
which is that
\begin{align}
\label{eqn:assumption-again}
    \argmax_y ~p(y \mid x) = \argmax_y ~p(y \mid \tilde{x}) ~~ \forall \tilde{x} \in B(x),
\end{align}
such that the adversary cannot change the label that achieves the maximum but can perturb the distribution.

The optimal standard classifier here is the Bayes optimal classifier $\fstar_c = \argmax_y p(y\mid x)$.
Assuming that $\fstar_c=\argmax_y p(y\mid x)$ is in $\sF$, then consistency follows by essentially the same argument as in the regression case.

\begin{theorem}
    \label{thm:classification}
    (Classification) Consider the minimizer of the standard population 0-1 loss,
    $\fstar_c=\argmin_f \E[\ell(f(x), y)]$ where $\ell(f(x), y) = \mathbf{1}\{\argmax_j f(x)_j = y\}$.
    Assuming \refeqn{assumption-again} holds, we have that for any $f$, $\E[\max_{\tilde{x}\in B(x)}\ell(f(x), y)] \geq \E[\max_{\tilde{x}\in B(x)}\ell(\fstar_c(x),y)]$, such that $\fstar_c$ is also optimal for the robust population 0-1 loss.
\end{theorem}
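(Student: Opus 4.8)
The plan is to mirror the argument for \refthm{regression} almost verbatim, the only change being that the $0$--$1$ loss $\ell$ depends on the predictor $f$ only through the label $\argmax_j f(x)_j$ it outputs, so I will track that label rather than the real-valued outputs. First I would record the standard optimality fact: the minimizer of the standard population $0$--$1$ risk is the Bayes classifier $\fstar_c(x) = \argmax_y p(y \mid x)$, which is assumed to lie in $\sF$, so that $\E[\ell(\fstar_c(x), y)] \le \E[\ell(f(x), y)]$ for every $f \in \sF$.

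Next I would invoke the label-invariance hypothesis~\refeqn{assumption-again}: for every $x$ and every $\tilde x \in B(x)$ we have $\argmax_y p(y \mid \tilde x) = \argmax_y p(y \mid x)$, hence $\fstar_c(\tilde x) = \fstar_c(x)$; in particular the label predicted by $\fstar_c$ is constant over the invariance set $B(x)$. Since $x \in B(x)$ and $\ell$ sees only this predicted label, this yields the pointwise identity $\max_{\tilde x \in B(x)} \ell(\fstar_c(\tilde x), y) = \ell(\fstar_c(x), y)$, and taking expectations,
\[
\E\!\left[\max_{\tilde x \in B(x)} \ell(\fstar_c(\tilde x), y)\right] \;=\; \E[\ell(\fstar_c(x), y)].
\]

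The last step is to chain inequalities exactly as in the regression proof: for any $f$,
\[
\E\!\left[\max_{\tilde x \in B(x)} \ell(f(\tilde x), y)\right] \;\ge\; \E[\ell(f(x), y)] \;\ge\; \E[\ell(\fstar_c(x), y)] \;=\; \E\!\left[\max_{\tilde x \in B(x)} \ell(\fstar_c(\tilde x), y)\right],
\]
where the first inequality holds because $x \in B(x)$, the second because $\fstar_c$ is Bayes optimal for the standard $0$--$1$ risk, and the equality is the display above. This is precisely the claimed inequality, so $\fstar_c$ is also optimal for the robust population $0$--$1$ loss.

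The calculations are routine; the only point that needs care is the handling of ties in $\argmax_y p(y \mid x)$ (and in $\argmax_j f(x)_j$), where the Bayes classifier and the loss are a priori set-valued. I would resolve this by fixing a single deterministic tie-breaking rule applied consistently to $x$ and all of its perturbations, so that $\fstar_c$ is a genuine function and~\refeqn{assumption-again} is read as equality of the resulting tie-broken labels; then the constancy of $\fstar_c$ on $B(x)$, and with it the pointwise identity used above, go through unchanged. I expect this bookkeeping — rather than any substantive mathematical difficulty — to be the main thing to get right.
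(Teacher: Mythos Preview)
Your proposal is correct and is essentially the same as the paper's proof, which simply states that replacing $\fstar$ by $\fstar_c$ and the squared loss by the $0$--$1$ loss in the proof of \refthm{regression} gives the result. You have just spelled out those substitutions explicitly (and added a harmless remark on tie-breaking), so there is nothing to change.
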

\begin{proof}
    Replacing $\fstar$ with $\fstar_c$ and $\ell(f(x),y)$ with the zero-one loss $\mathbf{1}\{\argmax_j f(x)_j = y\}$ in the proof of \refthm{regression} gives the result.
\end{proof}

In our staircase problem, from~\refeqn{invariance}, we assume that the target $y$ is generated as follows: $y = \fstar(x) + \sigma v_i$ where $v_i \overset{\text{i.i.d.}}{\sim} \sN(0, 1)$, we see that the points within an invariance sets $B(x)$ have the same target distribution (target distribution invariance).
\begin{align}
  \label{eqn:assumption-again-strict}
  \fstar(x) &= \fstar(\tilde{x})~~\forall \tilde{x} \in B(x) \\
  \implies p(y \mid x) &= p(y \mid \tilde{x}) \enspace\enspace \forall \tilde{x} \in B(x), 
\end{align}
for all $x\in\X$.

The target invariance condition above implies consistency in both the
regression and classification case.

\section{Convex staircase example}
\subsection{Data distribution}
\paragraph{Distribution of $\sX$.} We focus on a 1-dimensional regression case. Let $s$ be the total number of ``stairs'' in the staircase problem. Let $s_0\leq s$ be the number of stairs that have a large weight in the data distribution.
Define $\delta\in[0,1]$ to be the probability of sampling a perturbation point, i.e. $x\in \xline^c$, which we will choose to be close to zero.
The size of the perturbations is $\epsilon \in [0, \half)$, which is bounded by $\half$ so that $\round{x \pm \epsilon} = x$, for any $x \in \xline$.
The standard deviation of the noise in the targets is $\sigma>0$.
Finally, $m\in[0,1]$ is a parameter controlling the slope of the points in $\xline$.

Let $w\in \Delta_s$ be a distribution over $\xline$ where $\Delta_s$ is the probability simplex of dimension $s$.
We define the data distribution with the following generative process for one sample $x$.
First, sample a point $i$ from $\xline$ according to the categorical distribution described by $w$, such that $i\sim \text{Categorical}(w)$.
Second, sample $x$ by perturbing $i$ with probability $\delta$ such that
\[
x =
\begin{cases}
    i & \text{w.p. } 1-\delta\\
    i-\epsilon & \text{w.p. } \delta/2\\
    i+\epsilon & \text{w.p. } \delta/2.
\end{cases}
\]
Note that this is just a formalization of the distribution described in \refsec{convex}.
The sampled $x$ is in $\xline$ with probability $1-\delta$ and $\xline^c$ with probability $\delta$, where we choose $\delta$ to be small.

In addition, in order to exaggerate the difference between robust and standard estimators for small sample sizes,
we set $w$ such that the first $s_0$ stairs have the majority of probability mass.
To achieve this, we set the unnormalized probabilities of $w$ as
\[
\hat{w}_j =
\begin{cases}
    1/s_0 & j < s_0\\
    0.01 & j\geq s_0
\end{cases}
\]
and define $w$ by normalizing $w=\hat{w}/\sum_j\hat{w}_j$.
For our examples, we fix $s_0=5$.
In general, even though we can increase $s$ to create versions of our example with more stairs, $s_0$ is fixed to highlight the bad extrapolation behavior of the robust estimator.

\paragraph{Distribution of $\sY$.}
We define the target distribution as $(Y\mid X=x)\sim\mathcal{N}(m\lfloor x\rceil, \sigma^2)$, where $\lfloor x\rceil$ rounds $x$ to the nearest integer.
The invariance sets are $B(x) = \{\lfloor x\rceil-\epsilon, \lfloor x \rceil, \lfloor x\rceil+\epsilon\}$.
We define the distribution such that for any $x$, all points in $B(x)$ have the same mean target value $m \lfloor x \rceil$.
See Figure~\ref{fig:splines} for an illustration.

Note that $B(x)$ is defined such that (\refeqn{assumption-again-strict}) holds, since for any $x_1,x_2\in B(x)$, $\lfloor x_1 \rceil= \lfloor x_2\rceil$ and thus $p(y\mid x_1)=p(y\mid x_2)$. The conditional distributions are defined since $p(\tilde{x})>0$ for any $\tilde{x}\in B(x)$.

\subsection{Model}
Our hypothesis class is the family of cubic B-splines as defined in~\citep{friedman2001elements}.
Cubic B-splines are piecewise cubic functions, where the endpoints of each cubic function are called the knots.
In our example, we fix the knots to be
$\mathbf{\tau}=[-\epsilon, 0, \epsilon, \dots, (s-1)-\epsilon, s-1, (s-1)+\epsilon]$,
which places a knot on every point on the support of $\sX$. This ensures that the family is expressive enough to include $f^\star$, which is any function in $\sF$ which satisfies $\truef(x) = m \lfloor x \rceil$ for all $x$ in $\sX$.
Cubic B-splines can be viewed as a kernel method with kernel feature map $\Phi:\X\rightarrow \RN^{3s+2}$, where $s$ is the number of stairs in the example.

For some regularization parameter $\lambda \geq 0$ we optimize with the penalized smoothing spline loss function over parameters $\theta$,
\begin{align}
  \ell(f_\theta(x), y) =&(y - f_\theta(x))^2 + \lambda\int(f_\theta''(t))^2dt \\
  &= (y - \Phi(x)^T\theta)^2 + \lambda\theta^T\Omega\theta,
\end{align}
where $\Omega_{i,j}=\int \Phi''(t)_i\Phi''(t)_jdt$ measures smoothness in terms of the second derivative.With respect to the regularized objectives (\ref{eqn:stest}) and (\ref{eqn:robest}), the norm regularizer is $\|f\|^2 = \theta^T\Omega\theta$.

We implement the optimization of the standard and robust objectives using the basis described in~\citep{friedman2001elements}.
The regularization penalty matrix $\Omega$ computes second-order finite differences of the parameters $\theta$.
Suppose we have $n$ samples of training inputs $X=\{x_1,\dots, x_n\}$ and targets $\by=\{y_1,\dots, y_n\}$ drawn from $\Prob$ .
The standard spline objective solves the linear system
\begin{align*}
    \hat{\theta}_{\std} = (\Phi(X)^T\Phi(X) + \lambda\Omega)^{-1}\Phi(X)^T\by,
\end{align*}
where the $i$-th row of $\Phi(X)\in \RN^{n\times (3s+2)}$ is $\Phi(x_i)$.
The standard estimator is then $\stest(x)=\Phi(x)^T\hat{\theta}_{\std}$.
We solve the robust objective directly as a pointwise maximum of squared losses over the invariance sets (which is still convex) using CVXPY~\cite{diamond2016cvxpy}.

\subsection{Role of different parameters}

To construct an example where robustness hurts generalization, the main parameters needed are that the slope $m$ is large and that the probability $\delta$ of drawing samples from perturbation points $\xline^c$ is small.
When slope $m$ is large, the complexity of the true function increases such that good generalization requires more samples.
A small $\delta$ ensures that a low-norm linear solution has low test error.
This example is insensitive to whether there is label noise, meaning that $\sigma=0$ is sufficient to observe that robustness hurts generalization.

If $m\approx 0$, then the complexity of the true function is low and we observe that robustness helps generalization.
In contrast, this example relies on the fact that there is label noise ($\sigma > 0$) so that the noise-cancelling effect of robust training improves generalization.
In the absence of noise, robustness neither hurts nor helps generalization since both the robust and standard estimators converge to the true function ($f^*(x)=0$) with only one sample.

\begin{figure}[t!]
  \centering
    \subfigure[Small sample ($n=40$)]{\includegraphics[scale=0.3]{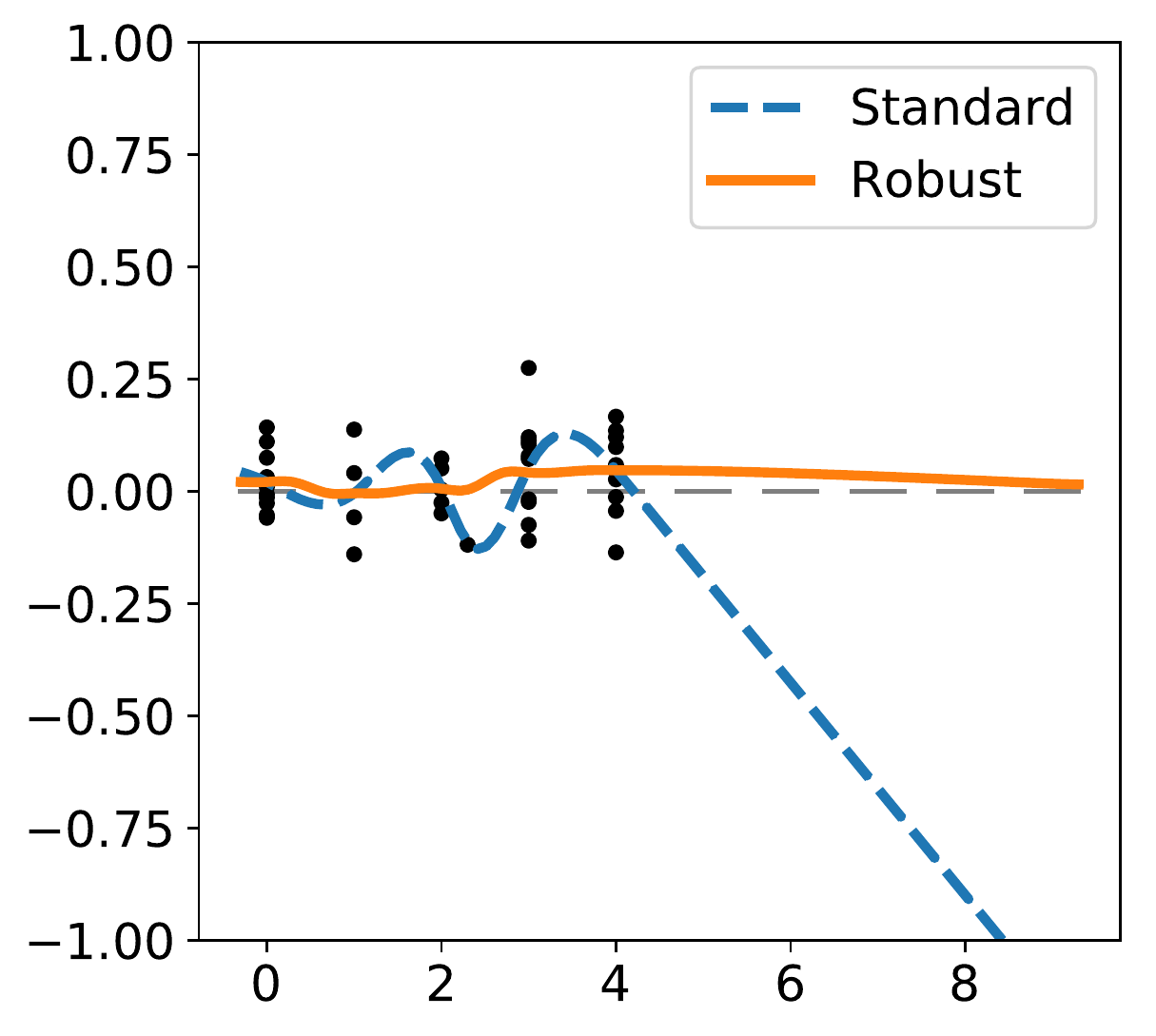}\label{fig:tradeon-small-sample}}
    \subfigure[Large sample ($n=25000$)]{
    \includegraphics[scale=0.3]{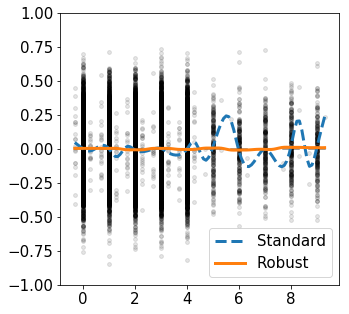}
    \label{fig:tradeon-large-sample}
  }
  \caption{\textbf{Left:} With small samples, the standard solution may overfit to noise, while adversarial training has a noise cancelling effect. \textbf{Right:} With large samples, both the robust and standard predictors have low test error, but the standard predictor is still more susceptible to noise.}
  \label{fig:splines-2}
\end{figure}

\subsection{Plots of other values}
\label{app-extraplots}

We show plots for a variety of quantities against number of samples $n$.
For each $n$, we pick the best regularization parameter $\lambda$ with respect to standard test MSE individually for robust and standard training.
in the $m=1$ (robustness hurts) and $m=0$ (robustness helps) cases, with all the same parameters as before. In both cases, the test MSE and generalization gap (difference between training MSE and test MSE) are almost identical due to robust and standard training having similar training errors. In the $m=1$ case where robustness hurts (Figure~\ref{fig:tradeoff-othervalues}), robust training finds higher norm estimators for all sample sizes. With enough samples, standard training begins to increase the norm of its solution as it starts to converge to the true function (which is complex) and the robust train MSE starts to drop accordingly.

In the $m=0$ case where robustness helps (Figure~\ref{fig:tradeon-othervalues}), the optimal predictor is the line $f(x)=0$, which has 0 norm. The robust estimator has consistently low norm. With small sample size, the standard estimator has low norm but has high test MSE. This happens when the standard estimator is close to linear (has low norm), but the estimator has the wrong slope, causing high test MSE. However, in the infinite data limit, both standard and robust estimators converge to the optimal solution.

\section{Robust self-training algorithm}
\label{sec:app-rst}

\begin{figure}[t]
    \centering
      \subfigure[Robust training vs. RST]{
        \includegraphics[scale=0.3]{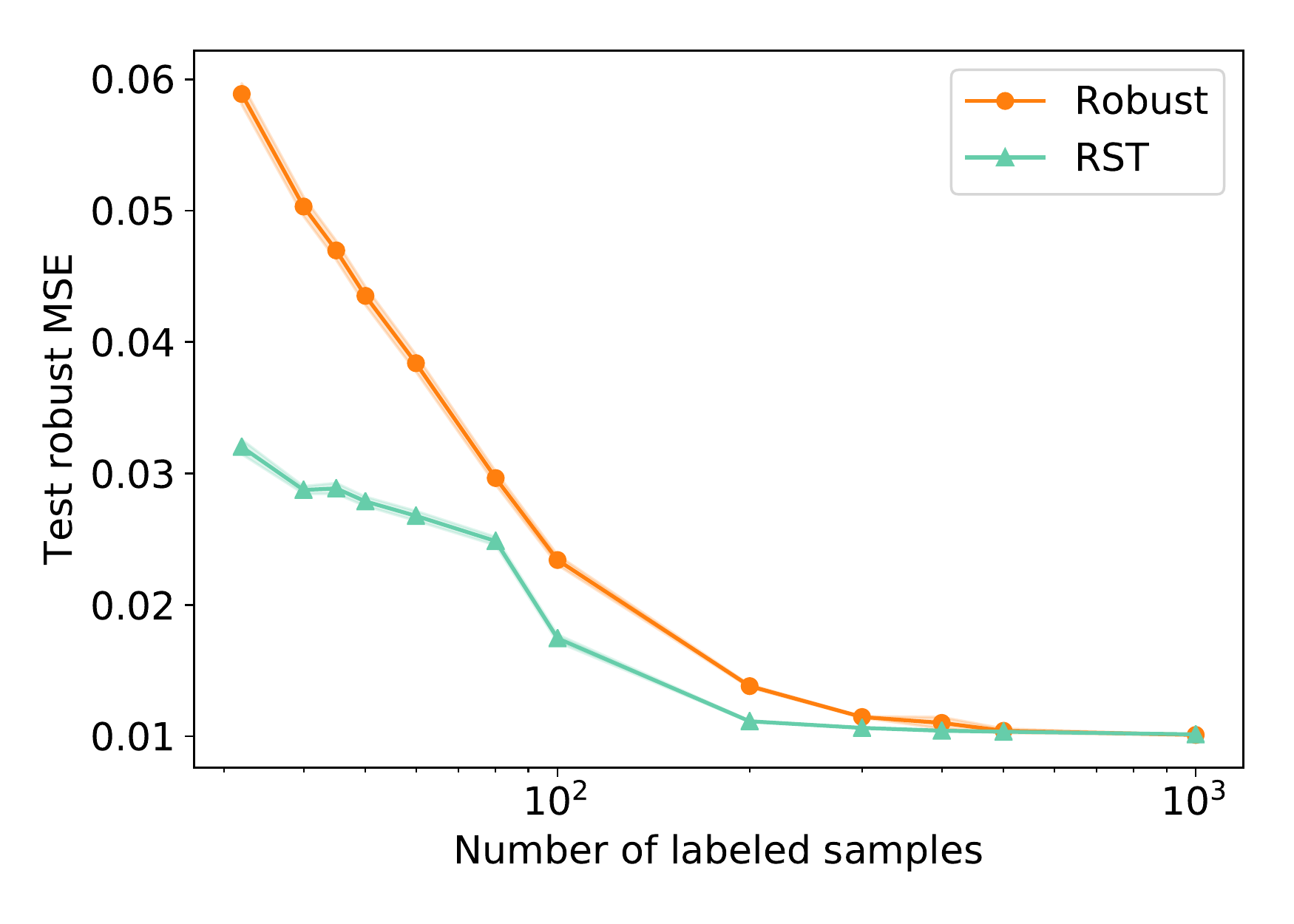}
        \label{fig:compare-rst-2-a}
      }
      \subfigure[Standard training vs. RST]{
        \includegraphics[scale=0.3]{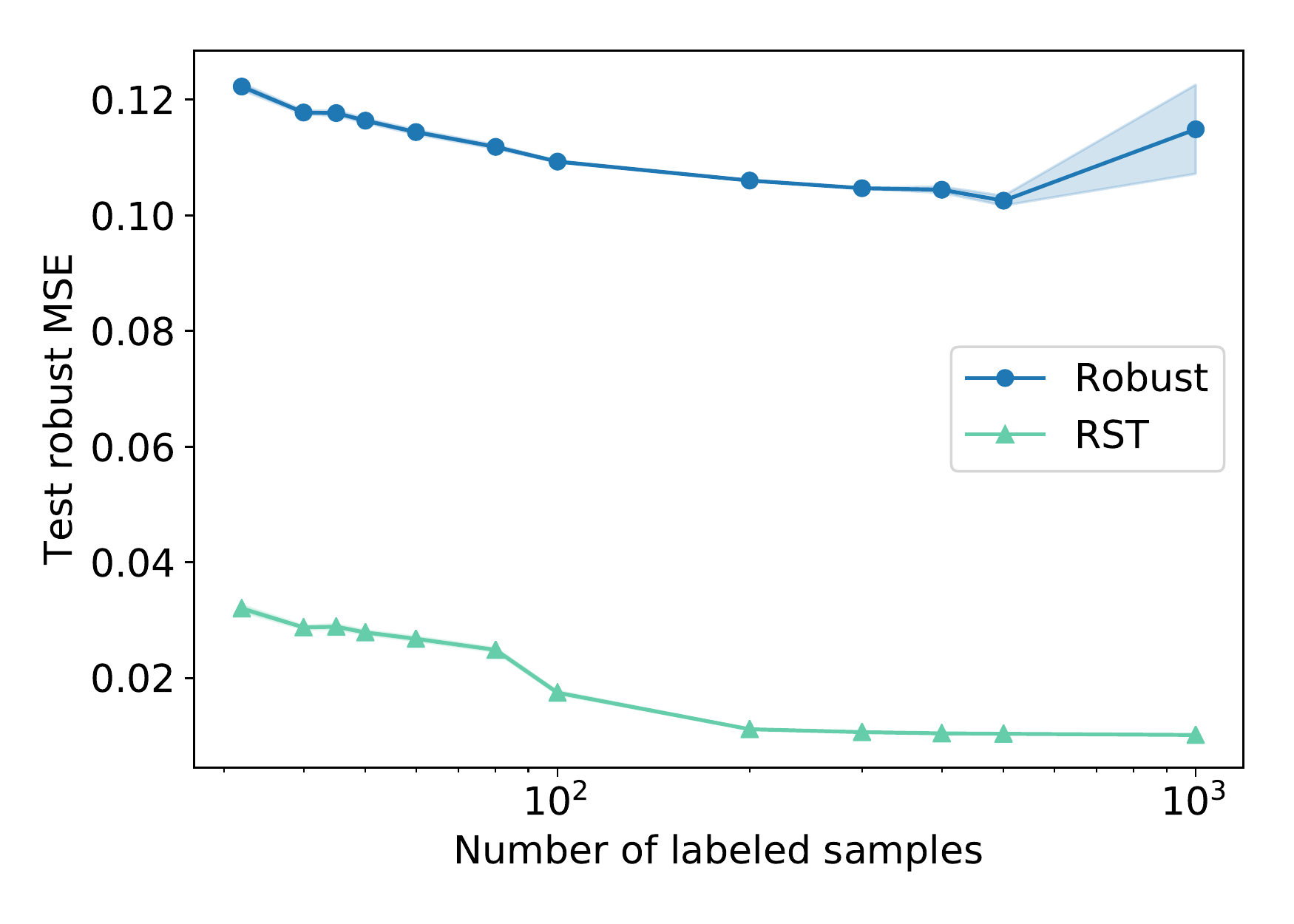}
        \label{fig:compare-rst-2-b}
      }
      \caption{Robust self-training (RST) improves test robust MSE (not just standard test MSE) over both standard and robust training. For each $n$, the regularization parameter $\lambda$ is chosen with respect to the best test MSE over a grid search for each of robust, RST, and standard training. \textbf{(a)} shows that robust self-training improves robust error over robust training. \textbf{(b)} confirms that robust self-training also improves robust test error over standard training.}
    \label{fig:compare-rst-2}
\end{figure}

We describe the robust self-training procedure, which performs robust training on a dataset augmented with unlabeled data.
The targets for the unlabeled data are generated from a standard estimator trained on the labeled training data.
Since the standard estimator has good standard generalization, the generated targets for the unlabeled data have low error on expectation.
Robust training on the augmented dataset seeks to improve both the standard and robust test error of robust training (over just the labeled training data).
Intuitively, robust self-training achieves these gains by mimicking the standard estimator on more of the data distribution (by using unlabeled data) while also optimizing the robust objective.

In robust self-training, we are given $n$ samples of training inputs $X=\{x_1,\dots, x_n\}$ and targets $\by=\{y_1,\dots, y_n\}$ drawn from $\Prob$ .
Suppose that we have additional $m$ unlabeled samples $X_u$ drawn from $\Prob_x$.
Robust self-training uses the following steps for a given regularization $\lambda$:
\begin{enumerate}
    \item Compute the standard estimator $\stest$~\refeqn{stest} on the labeled data $(X,~\by)$ with regularization parameter $\lambda$.
    \item Generate pseudo-targets $\by_u = \stest(X_u)$ by evaluating the standard estimator obtained above on the unlabeled data $X_u$.
    \item Construct an augmented dataset $X_{\text{aug}}=X\cup X_u$, $\by_{\text{aug}}=\by~\cup~\by_u$.
    \item Return a robust estimator $\robest$~\refeqn{robest} with the augmented dataset $(X_{\text{aug}},~\by_{\text{aug}})$ as training data.
\end{enumerate}

\subsection{Results on \cifar}
We present relevant results from the recent work of~\citep{carmon2019unlabeled} on robust self-training applied on \cifar~ augmented with unlabeled data in \reftab{rst-cifar}. The procedure employed in~\citep{carmon2019unlabeled} is identical to the procedure describe above, using a modified version of adversarial training (TRADES)~\citep{zhang2019theoretically} as the robust estimator. 
\begin{table}[t]
  \centering
  \begin{tabular}{c|ccc}
     & \textbf{\shortstack{Standard \\training}} & \textbf{\shortstack{Adversarial \\ training}} & \textbf{\shortstack{RST~\citep{carmon2019unlabeled}}}\\ \hline
    Robust test & $3.5\%$ & $45.8\%$ & \cellcolor{black!15} $62.5\%$ \\
 Standard test &  $95.2\%$ & $87.3\%$ &\cellcolor{black!15} $89.7\%$ 
  \end{tabular}
    \caption{Robust and standard accuracies for different training methods. Robust self-training (RST) leverages unlabeled data in addition to the \cifar~ training set to see an increase in both standard and robust accuracies over traditional adversarial training. To mitigate the tradeoff between robustness and accuracy, all we need is (possibly large amounts of) unlabeled data.}
  \label{tab:rst-cifar}
\end{table}

\subsection{Robust self-training doesn't sacrifice robustness}
\label{sec:app-rst-robust}

In Section~\ref{sec:rst}, we show that if we have access to additional unlabeled samples from the data distribution, robust self-training (RST) can mitigate the tradeoff in standard error between robust and standard estimators.
It is important that we do not sacrifice robustness in order to have better standard error.
Figure~\ref{fig:compare-rst-2} shows that in the case where robustness hurts generalization in our convex construction ($m=1$), RST improves over robust training not only in standard test error (Section~\ref{sec:rst}), but also in robust test error.
Therefore, by leveraging some unlabeled data, we can recover the standard generalization performance of standard training using RST while simultaneously improving robustness.

\begin{figure*}[t]
    \centering
    \subfigure[Test MSE]{
      \includegraphics[width=.3\textwidth]{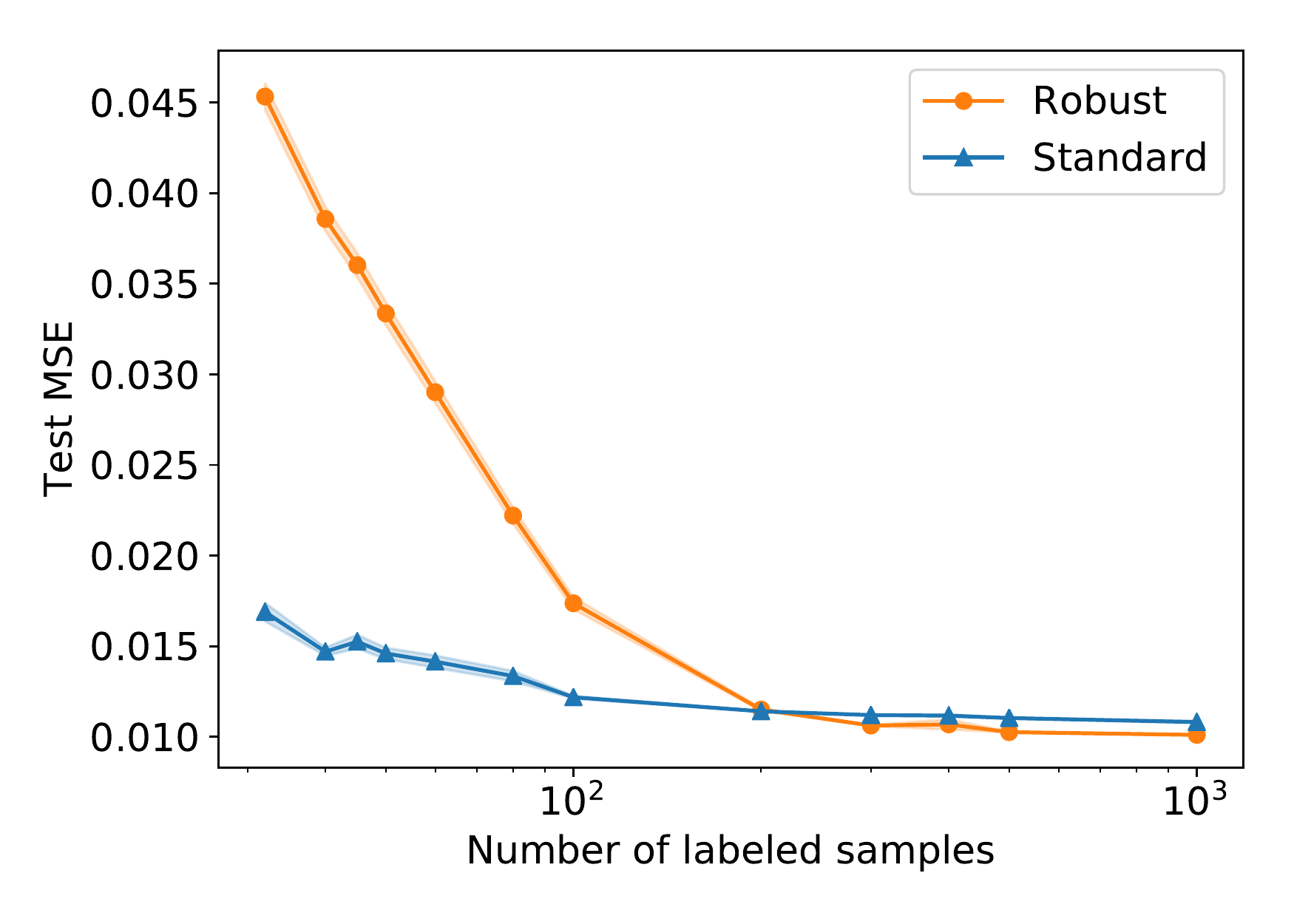}
      \label{fig:testmse_tradeoff}
  }
    \subfigure[Generalization gap]{
      \includegraphics[width=.3\textwidth]{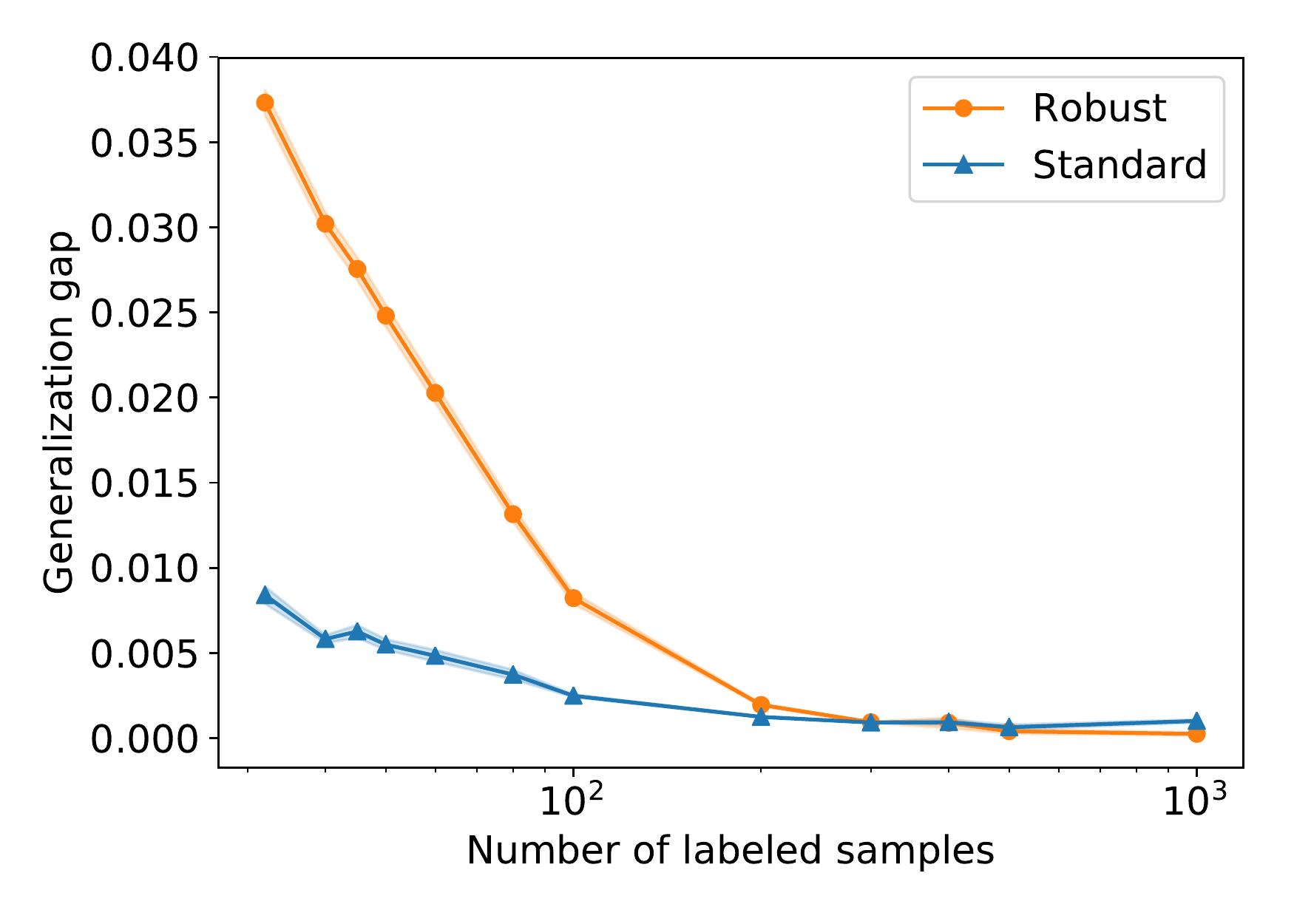}
      \label{fig:gg_tradeoff}
  }
    \subfigure[Squared norm]{
      \includegraphics[width=.3\textwidth]{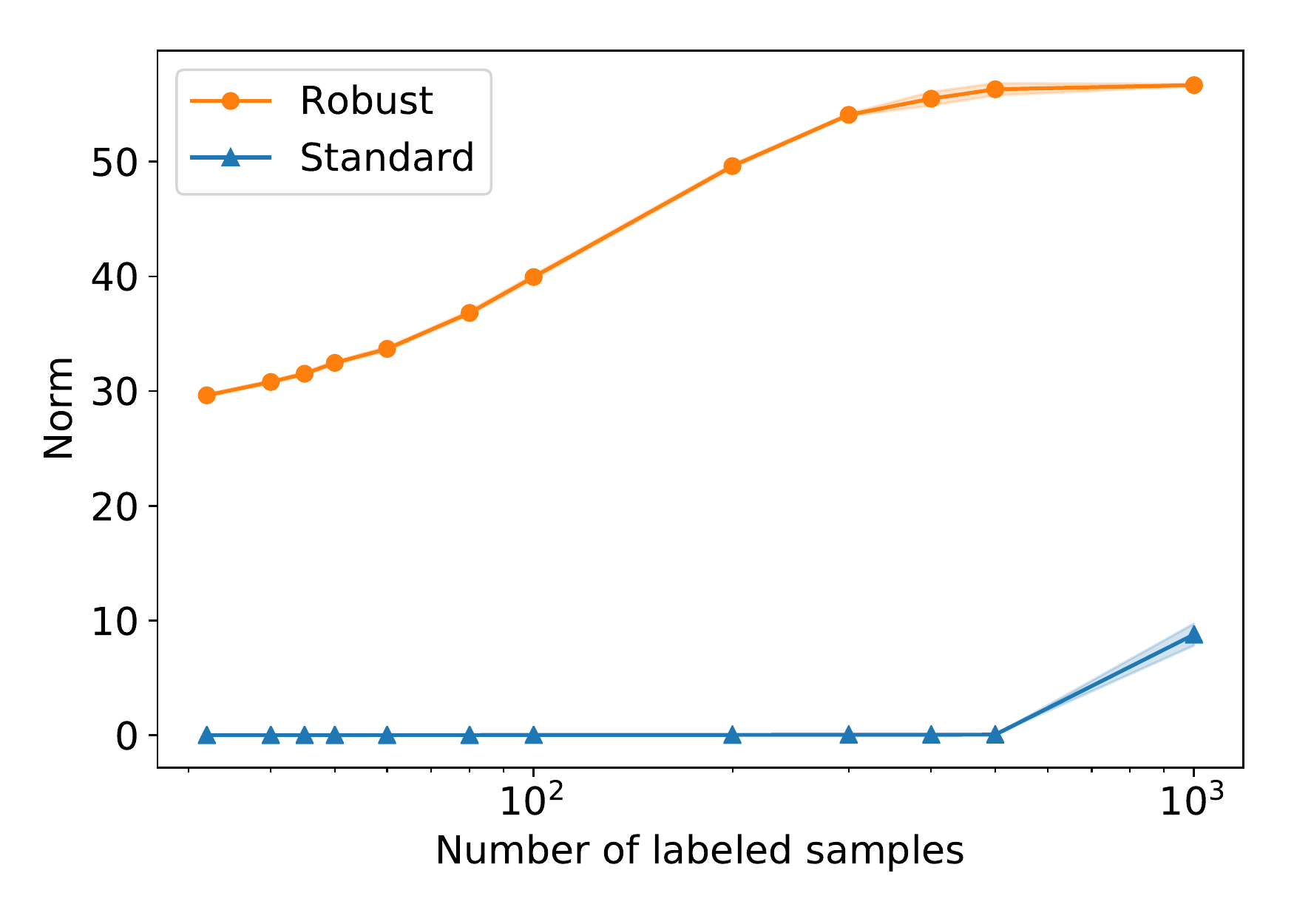}
      \label{fig:norm_tradeoff}
  }
    \subfigure[Robust train MSE]{
      \includegraphics[width=.3\textwidth]{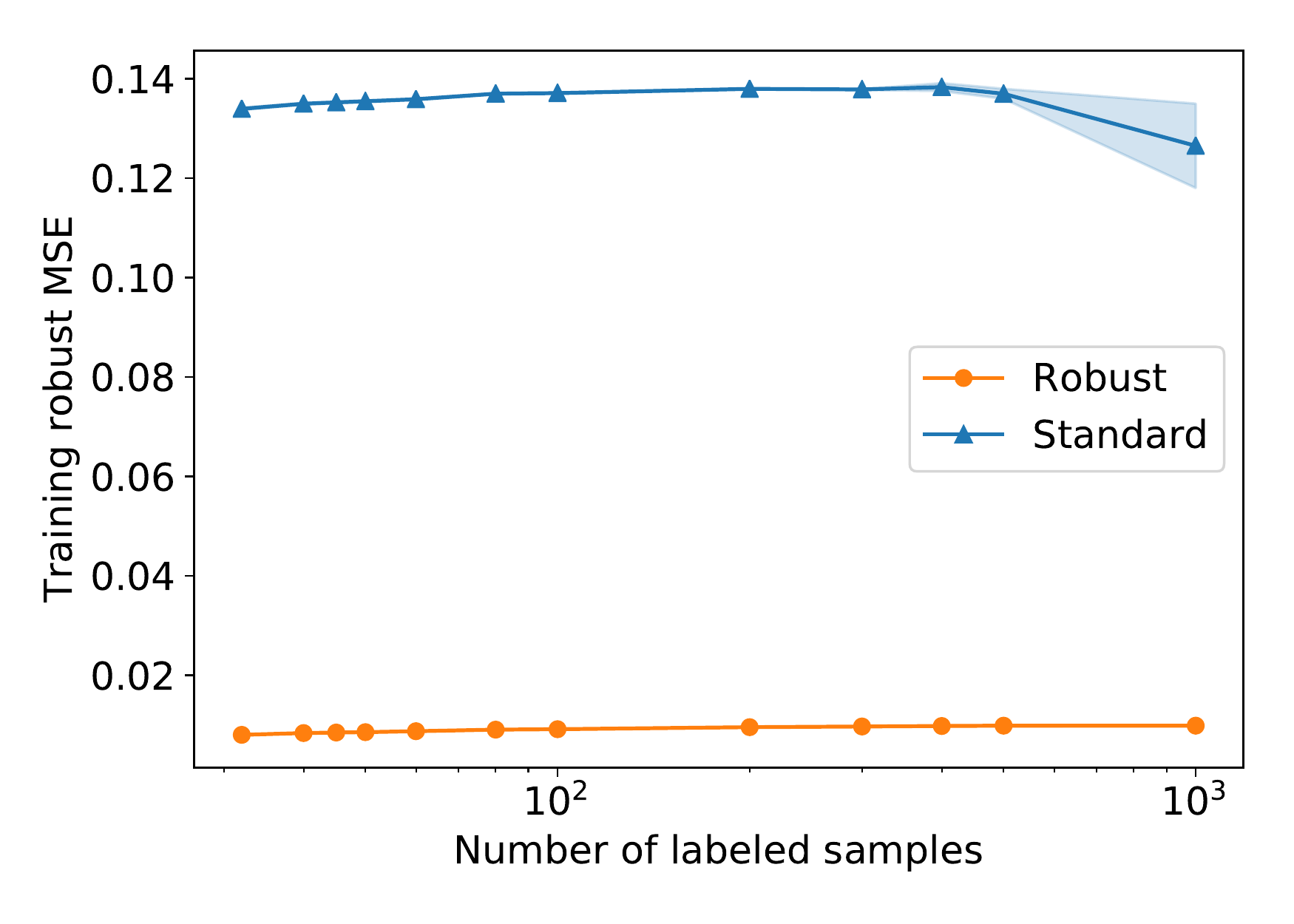}
      \label{fig:robtrain_tradeoff}
  }
    \subfigure[Robust test MSE]{
      \includegraphics[width=.3\textwidth]{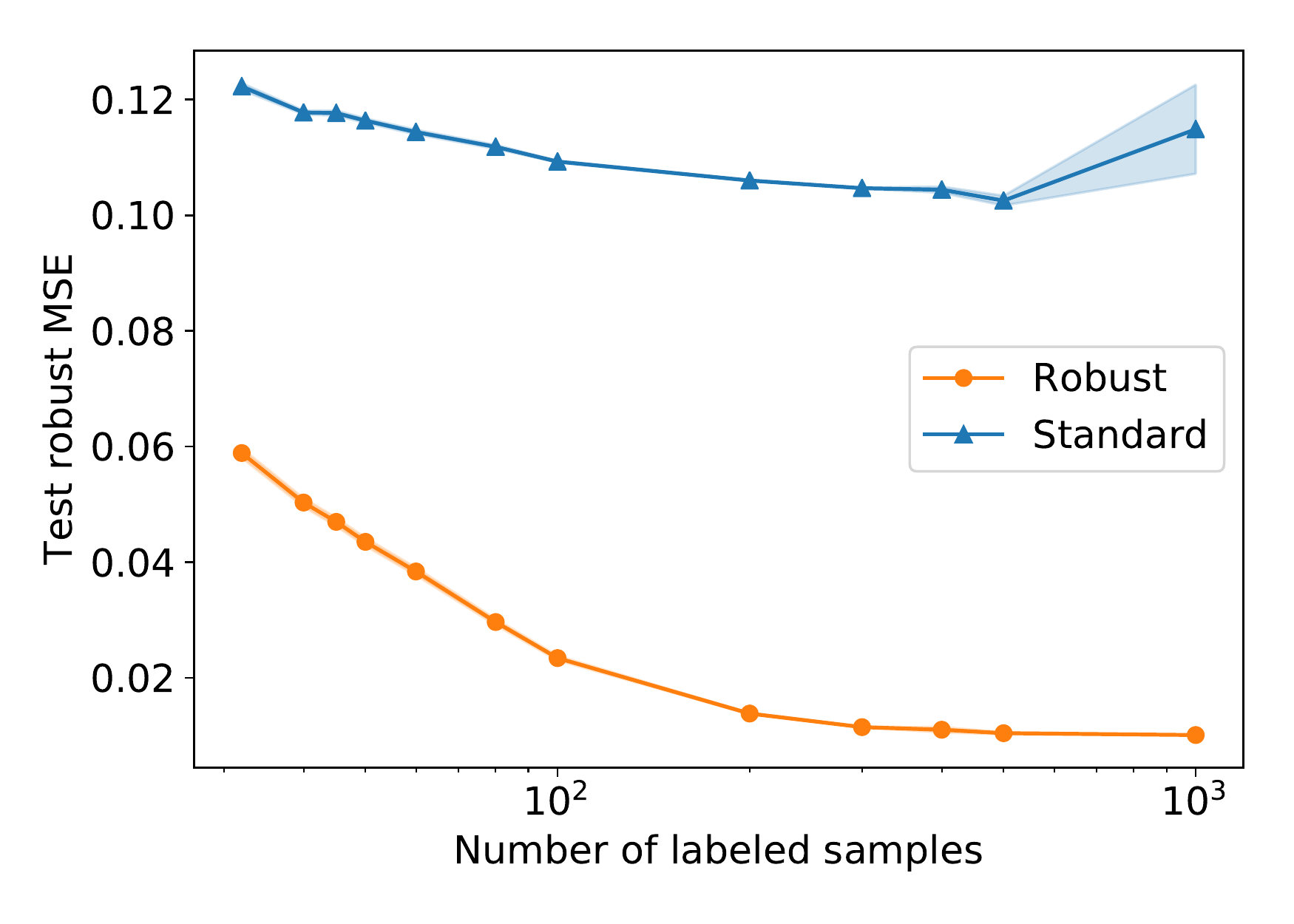}
      \label{fig:robtest_tradeoff}
  }
    \caption{Plots as number of samples varies for the case where robustness hurts ($m=1$).
    For each $n$, we pick the best regularization parameter $\lambda$ with respect to standard test MSE individually for robust and standard training.
    \textbf{(a),(b)} The standard estimator has lower test MSE, but the gap shrinks with more samples. Note that the trend in test MSE is almost identical to generalization gap. \textbf{(c)} The robust estimator has higher norm throughout training due to learning a more complex estimator. The norm of the standard estimator increases as sample size increases as it starts to converge to the true function, which is complex. \textbf{(d, e)} The robust train and test MSE is smaller for the robust estimator throughout. With larger sample size, the standard estimator improves in robust (train and test) MSE as it converges to the true function, which is robust. Shaded regions are 1 STD.
    }
    \label{fig:tradeoff-othervalues}
\end{figure*}
\begin{figure*}[t]
    \centering
    \subfigure[Test MSE]{
      \includegraphics[width=.3\textwidth]{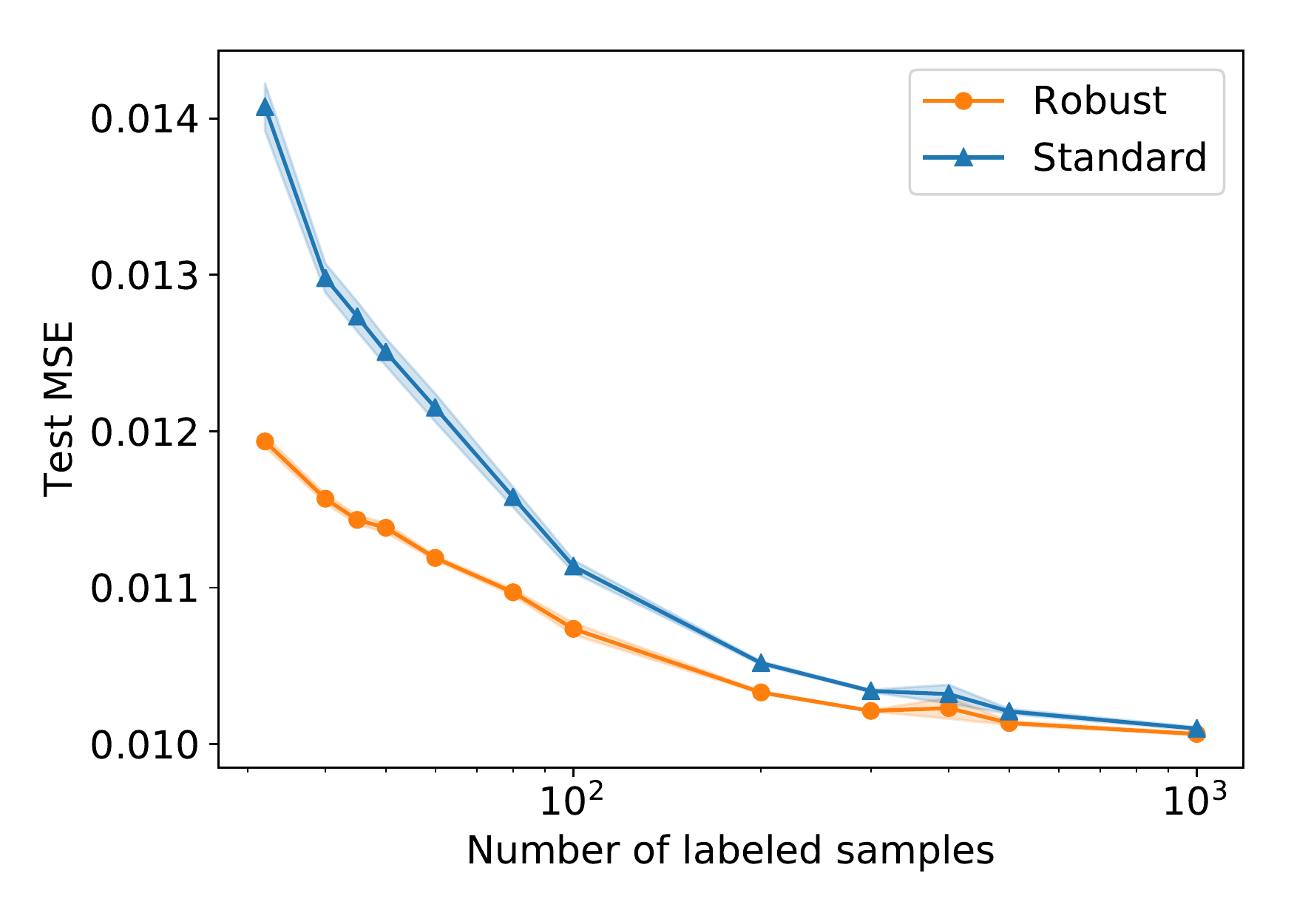}
  }
    \subfigure[Generalization gap (train MSE - test MSE)]{
      \includegraphics[width=.3\textwidth]{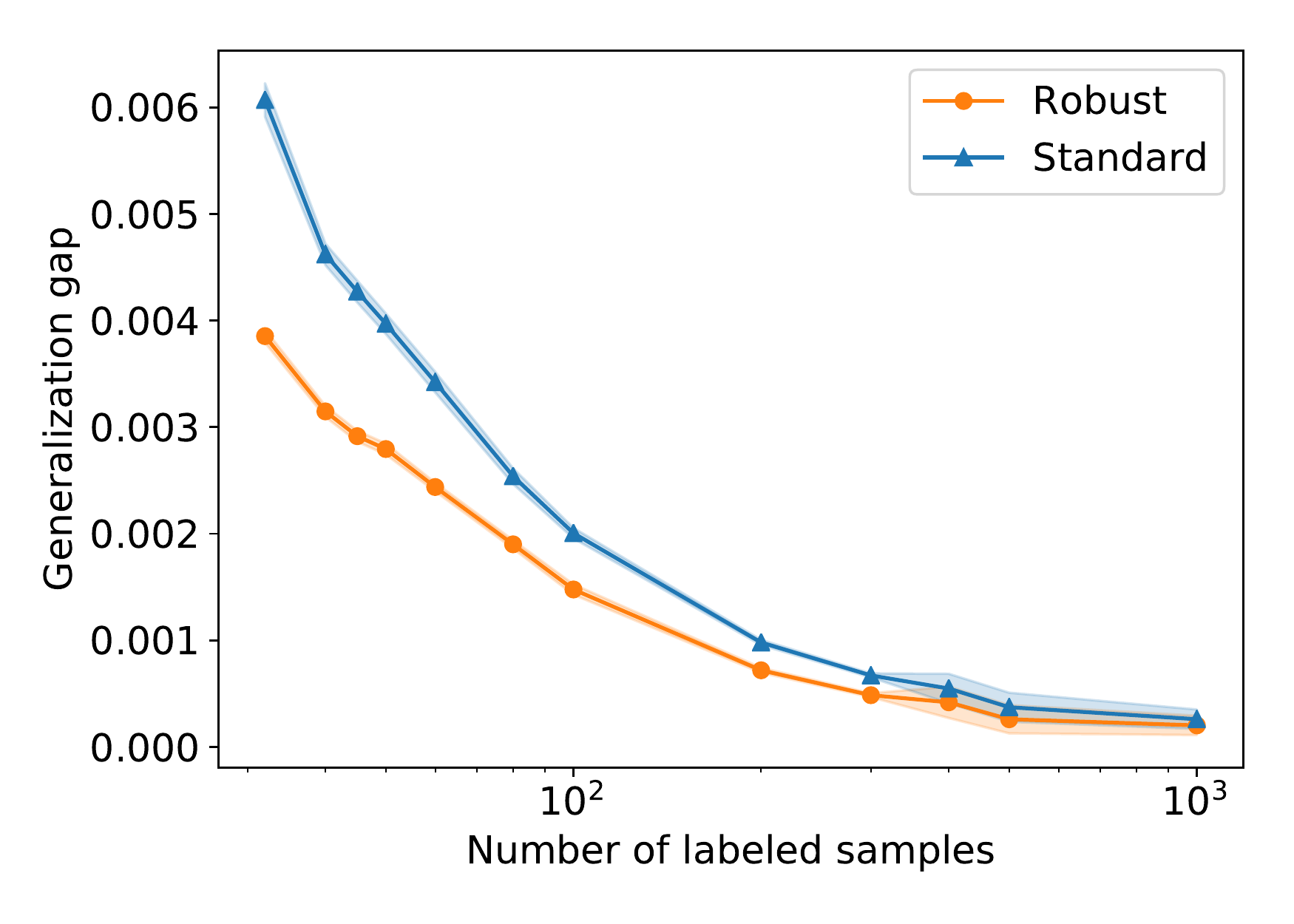}
  }
    \subfigure[Squared norm]{
      \includegraphics[width=.3\textwidth]{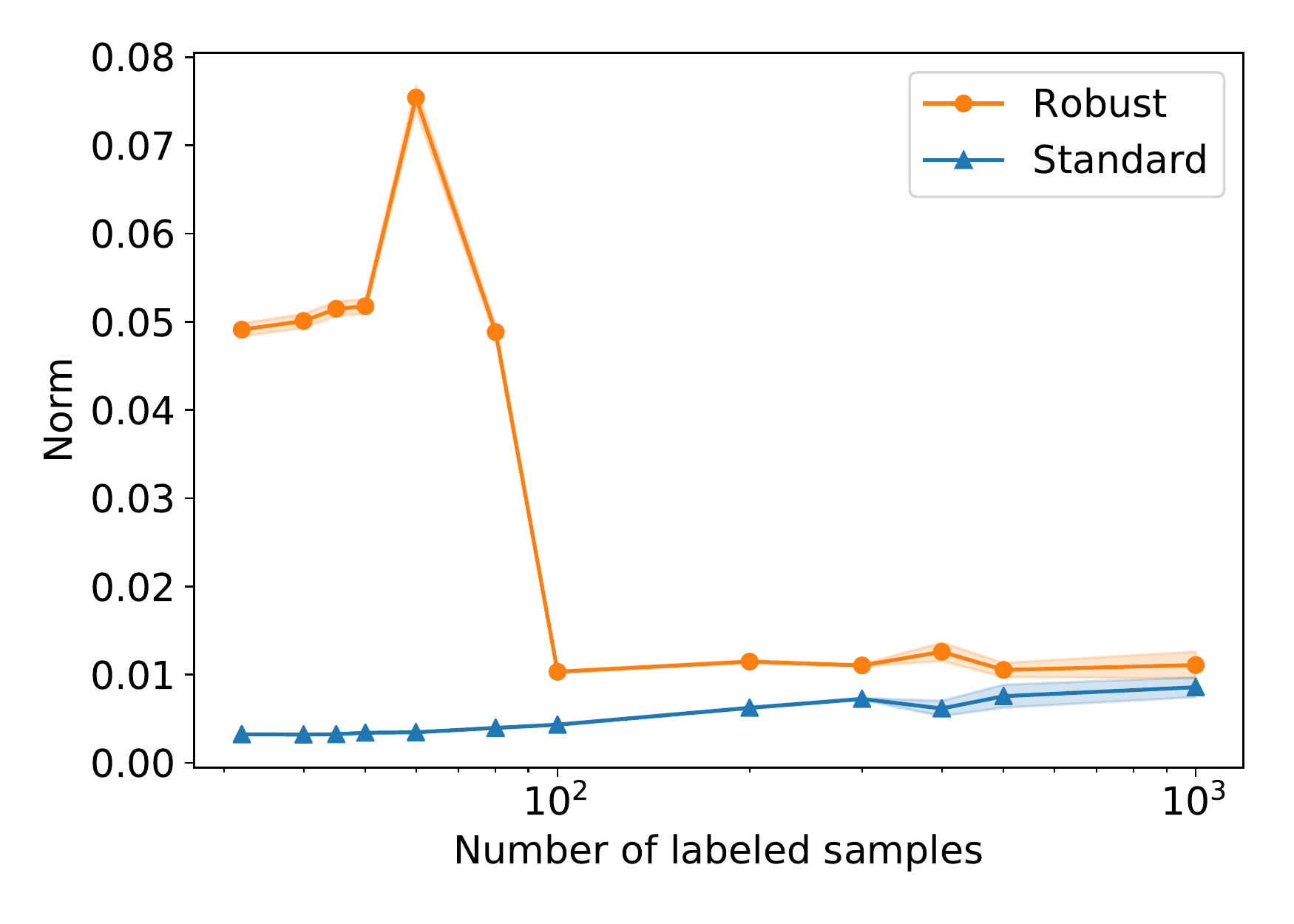}
  }
    \subfigure[Training robust MSE]{
      \includegraphics[width=.3\textwidth]{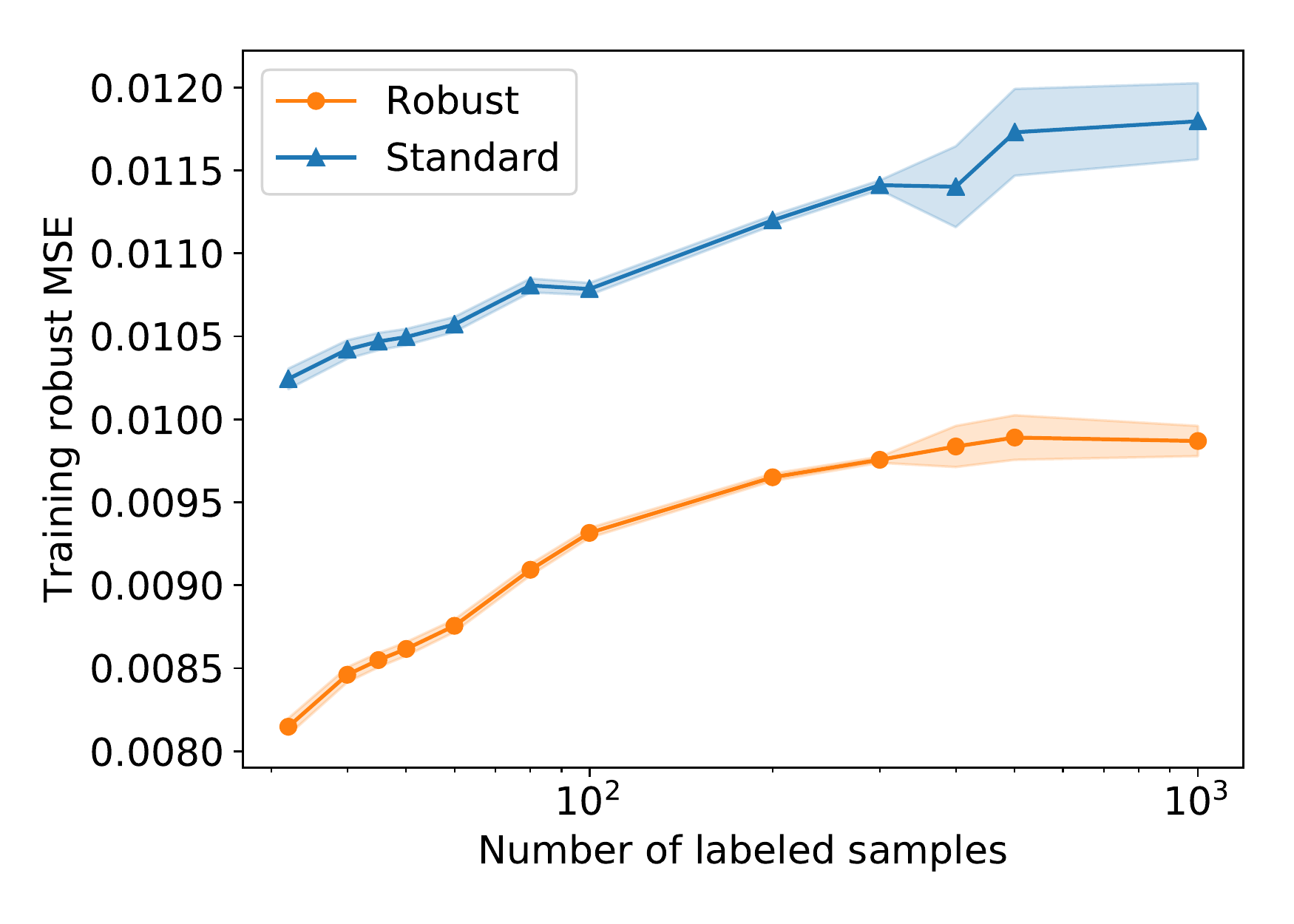}
  }
    \subfigure[Test robust MSE]{
      \includegraphics[width=.3\textwidth]{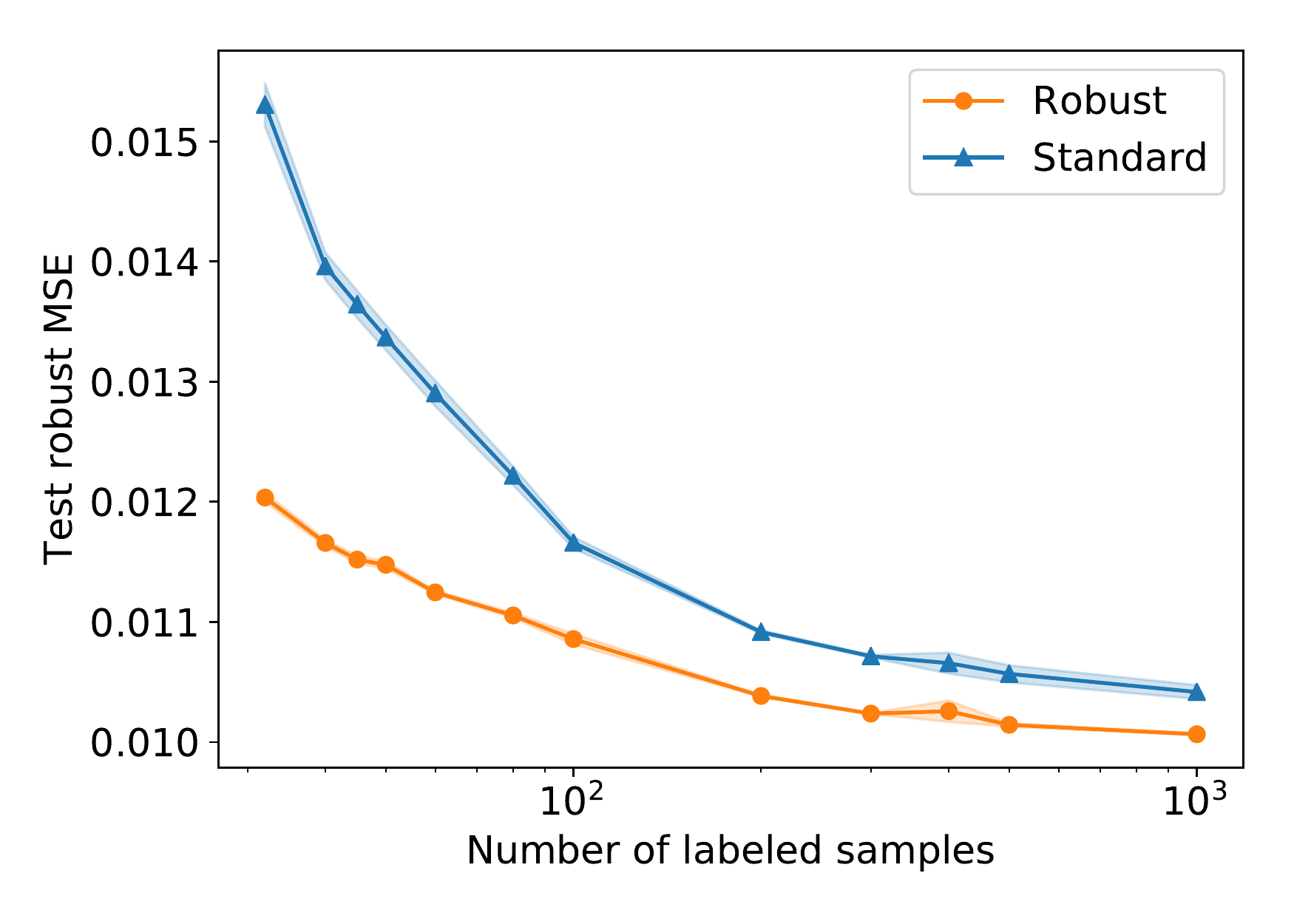}
  }
    \caption{Plots as number of samples varies for the case where robustness helps ($m=0$).
    For each $n$, we pick the best regularization parameter $\lambda$ with respect to standard test MSE individually for robust and standard training.
    \textbf{(a),(b)} The robust estimator has lower test MSE, and the gap shrinks with more samples. Note that the trend in test MSE is almost identical to generalization gap. \textbf{(c)} The robust estimator has consistent norm throughout due to the noise-cancelling behavior of optimizing the robust objective. While the standard estimator has low norm for small samples, it has high test MSE due to finding a low norm (close to linear) solution with the wrong slope. \textbf{(d, e)} The robust train and test MSE is smaller for the robust estimator throughout. Shaded regions are 1 STD.
    }
    \label{fig:tradeon-othervalues}
\end{figure*}

\section{Experimental details}
\label{sec:app-training-details}

\subsection{\cifar}
We train Wide ResNet 40-2 models~\citep{zagoruyko2016wide} using standard and adversarial training while varying the number of samples in \cifar. We sub-sample CIFAR-10 by factors of
$\{1, 2, 5, 8, 10, 20, 40\}$.
For sub-sample factors 1 to 20, we report results averaged from 2 trials each for standard and adversarial training.
For sub-sample factors greater than 20, we average over 5 trials.
We train adversarial models under the $\ell_{\infty}$ attack model with
$\ell_{\infty}$-norm constraints of sizes $\epsilon=\{1/255, 2/255, 3/255, 4/255\}$ using
PGD adversarial training~\citep{madry2018towards}. 
The models are trained for 200 epochs using minibatched gradient descent with momentum, such that $100\%$ standard training accuracy is achieved for both standard and adversarial models in all cases and
$>98\%$ adversarial training accuracy is achieved by adversarially trained models in most cases.
We did not include reuslts for subsampling factors greater than 50, since the test accuracies are very low (20-50\%).
However, we note that for very small sample sizes (subsampling factor 500), the robust estimator can have slightly better test accuracy than the standard estimator.
While this behavior is not captured by our example, we focus on capturing the observation that standard and robust test errors converge with more samples.

\subsection{\mnist}
The MNIST dataset consists of 60000 labeled examples of digits.
We sub-sample the dataset by factors of $\{1, 2, 5, 8, 10, 20, 40, 50, 80, 200, 500\}$ and report results for a small 3-layer CNN averaged over 2 trials for each sub-sample factor.
All models are trained for 200 epochs and achieve $100\%$ standard training accuracy in all cases.
The adversarial models achieve $>99\%$ adversarial training accuracy in all cases.
We train the adversarial models under the $\ell_{\infty}$ attack model with PGD adversarial training and $\epsilon=0.3$.
For computing the max in each training step, we use $40$ steps of PGD, with step size $0.01$ (the parameters used in~\citep{madry2018towards}). We use the Adam optimizer. The final robust test accuracy when training with the full training set was $~91\%$.

\paragraph{Initialization and trade-off for \mnist.}
\label{sec:app-init-mnist}
We note here that the tradeoff for adversarial training reported in~\citep{tsipras2019robustness}
is because the adversarially trained model hasn't converged (even after a large number of epochs).
Using the Xavier initialization, we get faster convergence with adversarial training and see no drop in clean accuracy at the same level of robust accuracy. Interestingly, standard training is not affected by initialization, while adversarial training is dramatically affected.

\end{document}